\icmltitlerunning{First-order Adversarial Vulnerability of Neural Networks and Input Dimension}
\newcommand{\loss}{\mathcal L}
\newcommand{\lnorm}{\mathcal \ell}
\renewcommand{\P}{\mathcal P}
\newcommand{\dset}{d_\pp}
\newcommand{\Dset}{\mathcal D(x,o)}
\renewcommand{\H}{\ref{H}}
\renewcommand{\S}{\ref{S}}
\newcommand{\e}[1]{\mathop{\mathbb{E}}\left[#1\right]}
\newcommand{\E}[1]{\mathop{\mathbb{E}} \! #1}
\newcommand{\ee}[2]{\mathop{\mathbb{E}_{#1}}\left[#2\right]}
\newcommand{\EE}[2]{\mathop{\mathbb{E}_{#1}} \! #2}
\newcommand{\xx}{\bm x}
\newcommand{\dd}{\bm \delta}
\newcommand{\pp}{{\bm p}}
\newcommand{\qq}{{\bm q}}
\renewcommand{\aa}{{\bm a}}
\newcommand{\dl}{\bm {\partial_{\scriptscriptstyle x}} \loss}
\newcommand{\dli}{\partial_{\scriptscriptstyle{x}} \loss}
\newcommand{\dlk}{\partial_{\scriptscriptstyle{k}} \loss}
\newcommand{\dfk}{\bm{\partial_{\scriptscriptstyle{x}}} f_{\scriptscriptstyle k}}
\newcommand{\dfh}{\bm{\partial_{\scriptscriptstyle{x}}} f_{\scriptscriptstyle h}}
\newcommand{\dfc}{\bm{\partial_{\scriptscriptstyle{x}}} f_{\scriptscriptstyle c}}
\newcommand{\dfik}{\partial_{\scriptscriptstyle{x}} f_{\scriptscriptstyle k}}
\newcommand{\dfic}{\partial_{\scriptscriptstyle{x}} f_{\scriptscriptstyle c}}
\DeclareMathOperator{\sign}{\mathrm{sign}}
\newcommand{\norm}[1]{\left \| #1 \right \|}
\newcommand{\dn}[1]{{\left\vert\kern-0.25ex\left\vert\kern-0.25ex\left\vert #1 \right\vert\kern-0.25ex\right\vert\kern-0.25ex\right\vert}}
\newcommand{\fa}{\ref{fig:results}a\xspace}
\newcommand{\fb}{\ref{fig:results}b\xspace}
\newcommand{\fc}{\ref{fig:results}c\xspace}
\newcommand{\fd}{\ref{fig:results}d\xspace}
\newcommand{\fe}{\ref{fig:results}e\xspace}
\newcommand{\ff}{\ref{fig:results}f\xspace}
\newcommand{\ffa}{\ref{fig:bcifar_summary}a\xspace}
\newcommand{\ffb}{\ref{fig:bcifar_summary}b\xspace}
\newcommand{\ffc}{\ref{fig:bcifar_summary}c\xspace}
\newcommand{\ffd}{\ref{fig:bcifar_summary}d\xspace}
\newcommand{\fga}{\ref{fig:bcifar_all}a\xspace}
\newcommand{\fgb}{\ref{fig:bcifar_all}b\xspace}
\newcommand{\fgc}{\ref{fig:bcifar_all}c\xspace}
\newcommand{\fge}{\ref{fig:bcifar_all}e\xspace}
\newcommand{\fgf}{\ref{fig:bcifar_all}f\xspace}
\newtheorem{theorem}{Theorem}
\newtheorem{proposition}[theorem]{Proposition}
\newtheorem{corollary}[theorem]{Corollary}
\newtheorem{lemma}[theorem]{Lemma}
\theoremstyle{definition}
\newtheorem{definition}[theorem]{Definition}
\theoremstyle{remark}
\newtheorem{remark}{Remark}
\begin{document}

\twocolumn[
\icmltitle{First-order Adversarial Vulnerability of Neural Networks and Input Dimension}



\icmlsetsymbol{equal}{*}

\begin{icmlauthorlist}
\icmlauthor{Carl-Johann Simon-Gabriel}{mpi,fb}
\icmlauthor{Yann Ollivier}{fb}
\icmlauthor{Bernhard Sch\"olkopf}{mpi}
\icmlauthor{L\'eon Bottou}{fb}
\icmlauthor{David Lopez-Paz}{fb}
\end{icmlauthorlist}

\icmlaffiliation{fb}{Facebook AI Research, Paris/New York}
\icmlaffiliation{mpi}{Empirical Inference Department, Max Planck Institute for
Intelligent Systems, T\"ubingen, Germany}

\icmlcorrespondingauthor{Carl-Johann Simon-Gabriel}{cjsimon@tue.mpg.de}

\icmlkeywords{Adversarial Examples}

\vskip 0.3in
]



\printAffiliationsAndNotice{}  

\begin{abstract}
    Over the past few years, neural networks were proven vulnerable to
    adversarial images: targeted but imperceptible image perturbations lead to
    drastically different predictions. We show that adversarial vulnerability
    increases with the gradients of the training objective when viewed as a
    function of the inputs. Surprisingly, vulnerability does not depend on
    network topology: for many standard network architectures, we prove that at
    initialization, the $\lnorm_1$-norm of these gradients grows as the square
    root of the input dimension, leaving the networks increasingly vulnerable
    with growing image size. We empirically show that this dimension dependence
    persists after either usual or robust training, but gets attenuated with
    higher regularization.
\end{abstract}

\section{Introduction}

Following the work of \citet{goodfellow15explaining}, Convolutional Neural
Networks (CNNs) have been found vulnerable to adversarial examples: an
adversary can drive the performance of state-of-the art CNNs down to chance
level with imperceptible changes to the inputs.

Based on a simple linear model, \citeauthor{goodfellow15explaining}\
already noted that adversarial vulnerability should depend on input
dimension.  \Citet{gilmer18adversarial,shafahi19adversarial} later
confirmed this, by showing that adversarial robustness
is harder to obtain with larger input dimension. However, these results
are different in nature from \citeauthor{goodfellow15explaining}'s
original observation: they rely on assumptions on the dataset that amount
to a form of uniformity in distribution over the input dimensions (e.g.\
concentric spheres, or bounded densities with full support). In the end,
this analysis tends to incriminate the data: if the data can be anything, and in
particular if it can spread homogeneously across many
input dimensions, then robust classification gets harder.

Image datasets do not satisfy these assumptions: they do not have full
support, and their probability distributions get more and more peaked with
larger input dimension (pixel correlation increases).  Intuitively, for image
classification, higher resolution should help, not hurt. 
Hence data might be the wrong culprit: if we want to understand the
vulnerability of our classifiers, then we should understand what is wrong with
our classifiers, not with our images. 

We therefore follow \citeauthor{goodfellow15explaining}'s original
approach, which explains adversarial vulnerability by properties of the
classifiers. Our main theoretical results start by formally extending
their result for a single linear layer to almost all current deep
feedforward network architectures. There is a further correction: based
on the gradients of a linear layer, \citeauthor{goodfellow15explaining}
predicted a linear increase of adversarial vulnerability with input
dimension $d$. However, they did not take into account that a layer's typical weights decrease
like $\sqrt d$. Accounting for this, the dependence becomes $\sqrt d$
rather than $d$, which is confirmed by both our theory and experiments. 

Our approach relies on evaluating the norm of gradients of the network
output with respect to its inputs. At first order, adversarial
vulnerability is related to gradient norms. We show that this norm is a
function of input dimension only, whatever the network architecture is.
The analysis is fully formal at initialization, and experiments show that
the predictions remain valid throughout training with very good
precision.

Obviously, this approach assumes that the classifier and loss are
differentiable. So arguably it is unclear whether it can explain the
vulnerability of networks with obfuscated or masked gradients. Still,
\citet{athalye18obfuscated} recently showed that masked gradients only
give a false sense of security: by reconstructing gradient approximations
(using differentiable nets!), the authors circumvented all
state-of-the-art masked-gradient defenses. This suggests that explaining
the vulnerability of differentiable nets is crucial, even for
non-differentiable nets. 

Although adversarial vulnerability was known to increase with gradient norms,
the exact relation between the two, and the approximations made, are
seldom explained, let alone tested empirically.
Section~\ref{sec:av_and_gradients} therefore starts with a detailed
discussion of the relationship between adversarial vulnerability and
gradients of the loss. Precise definitions
help with sorting out all approximations used.  We also revisit and
formally link several old and recent defenses, such as
double-backpropagation \mbox{\citep{drucker91double}} and FGSM
\mbox{\citep{goodfellow15explaining}}.  Section~\ref{sec:grad_estimation}
proceeds with our main theoretical results on the dimension dependence of
adversarial damage. Section~\ref{experiments} tests our predictions
empirically, as well as the validity of all approximations.

Our contribution can be summarized as follows.
\begin{itemize}
\item We show an empirical one-to-one relationship between average gradient norms and
adversarial vulnerability. This confirms that an essential part of adversarial
vulnerability arises from first-order phenomena.
\item We formally prove that, at initialization, the first-order vulnerability of common
neural networks increases as $\sqrt d$ with input dimension $d$.
Surprisingly, this is almost
independent of the architecture. Almost all current architectures are hence, by
design, vulnerable at initialization.
\item We empirically show that this dimension dependence persists after both
usual and robust (PGD) training, but gets dampened and eventually vanishes with
higher regularization. Our experiments suggest that PGD-regularization
effectively recovers dimension independent accuracy-vulnerability trade-offs.
\item We observe that further training after the 
training loss
has reached its minimum can provide improved test accuracy, but severely damages the
network's robustness. The last few accuracy points require a considerable
increase of network gradients.
\item We notice a striking discrepancy between the gradient norms (and
therefore the vulnerability) on the training and test sets respectively. It
suggests that gradient properties do not generalize well and that, outside the
training set, networks may tend to revert to initialization-like gradient
properties.
\end{itemize}

Overall, our results show that, without strong regularization, the gradients
and vulnerability of current networks naturally tend to grow with input
dimension. This suggests that current networks have too many degrees of
`gradient-freedom'. Gradient regularization can counter-balance this
to some extent, but on the long run, our networks may benefit from
incorporating more data-specific knowledge. The independence of our results on
the network architecture (within the range of currently common
architectures) suggests that doing so would require new network
modules.

\paragraph{Related Literature} \Citet{goodfellow15explaining} already noticed
the dimension dependence of adversarial vulnerability. As opposed to
\citet{amsaleg17vulnerability,gilmer18adversarial,shafahi19adversarial}, their
(and our) explanation of the dimension dependence is data-independent.
Incidentally, they also link adversarial vulnerability to loss gradients and
use it to derive the FGSM adversarial augmentation defense (see
Section~\ref{sec:av_and_gradients}). \Citet{ross18improving} propose to
robustify networks using the old double-backpropagation, but make no connection
to FGSM and adversarial augmentation (see our Prop.\ref{lossEqui}).
\Citet{lyu15unified} discuss and use the connection between gradient-penalties
and adversarial augmentation, but surprisingly never empirically compare both,
which we do in Section~\ref{penResults}. This experiment is crucial to confirm
the validity of the first-order approximation made in \eqref{dualnorm} to link
adversarial damage and loss-gradients. \Citet{hein17formal} derived yet another
gradient-based penalty --the \emph{cross-Lipschitz}-penalty-- by considering
and proving formal guarantees on adversarial vulnerability (see
App.\ref{crosslip}).  Penalizing network-gradients is also at the heart of
contractive auto-encoders as proposed by \citet{rifai11contractive}, where it
is used to regularize the encoder-features.  A gradient regularization of the
loss of generative models also appears in Proposition~6 of
\citet{ollivier14autoencoders}, where it stems from a code-length bound on the
data (minimum description length). For further references on adversarial
attacks and defenses, see e.g.~\citet{yuan17adversarial}.

\section{From Adversarial Examples to Large Gradients\label{sec:av_and_gradients}}

Suppose that a given classifier
$\varphi$ classifies an image $\xx$ as being in category
$\varphi(\xx)$. An adversarial image is a small modification of
$\xx$, barely noticeable to the human eye, that suffices
to fool the classifier into predicting a class different from $\varphi(\xx)$.
It is a \emph{small} perturbation of the inputs, that creates a
\emph{large} variation of outputs. Adversarial examples thus seem inherently related
to large gradients of the network. A connection that we will now clarify.
Note that visible adversarial examples
sometimes appear in the literature, but we deliberately focus on imperceptible
ones.

\paragraph{Adversarial vulnerability and adversarial damage.}
In practice, an adversarial image is constructed by adding a perturbation
$\dd$ to the original image $\xx$ such that $\norm{\dd} \leq \epsilon$ for some (small) number
$\epsilon$ and a given norm $\norm{\cdot}$ over the input space. 
We call the perturbed input $\xx + \dd$ an $\epsilon$-sized
$\norm{\cdot}$-attack and say that the attack was successful when $\varphi(\xx
+ \dd) \neq \varphi(\xx)$. This motivates
\begin{definition}\label{advvul}
    Given a distribution $P$ over the input-space, we call \emph{adversarial
    vulnerability} of a classifier $\varphi$ to an $\epsilon$-sized
    $\norm{\cdot}$-attack the probability that there exists a perturbation
    $\dd$ of $\xx$ such that 
    \begin{equation}\label{ae_def}
        \norm{\dd} \leq \epsilon \quad \text{and} \quad
        \varphi(\xx) \neq \varphi(\xx+\dd) \,.
    \end{equation}

    We call the average increase-after-attack $\ee{\xx \sim P}{\Delta \loss}$
    of a loss $\loss$ the \emph{adversarial ($\loss$-) damage} (of the
    classifier $\varphi$ to an $\epsilon$-sized $\norm{\cdot}$-attack).
\end{definition}
When $\loss$ is the 0-1-loss $\loss_{0/1}$, adversarial damage is the
accuracy-drop after attack.
The 0-1-loss damage is always smaller than
adversarial vulnerability, because vulnerability counts all class-changes of
$\varphi(\xx)$, whereas some of them may be neutral to adversarial damage
(e.g.\ a change between two wrong classes). The
$\loss_{0/1}$-adversarial
damage thus lower bounds adversarial vulnerability. Both are even equal when
the classifier is perfect (before attack), because then every change of label
introduces an error.
It is hence tempting to evaluate adversarial vulnerability
with $\loss_{0/1}$-adversarial damage.

\paragraph{From $\Delta \loss_{0/1}$ to $\Delta \loss$ and to $\dl$.}
In practice however, we do not train our classifiers with the
non-differentiable 0-1-loss but use a smoother surrogate loss $\loss$, such as
the cross-entropy loss. For similar reasons, we will now investigate the
adversarial damage $\ee{\xx}{\Delta \loss(\xx, c)}$ with loss $\loss$ rather
than $\loss_{0/1}$.  Like for
\citet{goodfellow15explaining,lyu15unified,sinha18certifiable} and many others,
a classifier $\varphi$ will hence be robust if, on average over $\xx$, a small
adversarial perturbation $\dd$ of $\xx$ creates only a small variation $\delta
\loss$ of the loss. Now, if $\norm{\dd} \leq \epsilon$, then a first order
Taylor expansion in $\epsilon$ shows that
\begin{equation}\label{dualnorm}
    \begin{aligned}
    \delta \loss
        &= \max_{\dd \,:\, \norm{\dd} \leq \epsilon}
               | \loss(\xx+\dd, c) - \loss(\xx, c) |\\
        &\approx \max_{\dd \,:\, \norm{\dd} \leq \epsilon}
                \left | \dl
                    \cdot \dd \right |
        = \epsilon \, \dn{\dl} ,
    \end{aligned}
\end{equation}
where $\dl$ denotes the gradient of $\loss$ with respect to $\xx$, and where
the last equality stems from the definition of the dual norm $\dn{\cdot}$
of $\norm{\cdot}$.
Now two remarks. First: the dual norm only kicks in
because we let the input noise $\dd$ optimally adjust to the coordinates of
$\dl$ within its $\epsilon$-constraint. This is the brand mark of
\emph{adversarial} noise: the different coordinates add up,
instead of statistically canceling each other out as they would with random
noise. For example, if we impose
that $\norm{\dd}_2 \leq \epsilon$, then $\dd$ will strictly align with $\dl$.
If instead $\norm{\dd}_\infty \leq \epsilon$, then $\dd$ will align with the
sign of the coordinates of $\dl$. Second remark: while the Taylor expansion
in~\eqref{dualnorm} becomes exact for infinitesimal perturbations, for finite
ones it may actually be dominated by higher-order terms.
Our experiments (Figures~\ref{fig:normDependence} \&~\ref{fig:results}) however strongly suggest
that in practice the first order term dominates the others.
Now, remembering that the dual norm of an $\lnorm_p$-norm is the corresponding
$\lnorm_q$-norm, and summarizing,  we have proven
\begin{lemma}\label{advExAndDualNorm}
    At first order approximation in $\epsilon$, an $\epsilon$-sized adversarial
    attack generated with norm $\norm{\cdot}$ increases the loss $\loss$ at
    point $\xx$ by $\epsilon \, \dn{\dl}$, where $\dn{\cdot}$ is the dual norm of
    $\norm{\cdot}$. 
    In particular, an $\epsilon$-sized $\lnorm_p$-attack increases the loss
    by~$\epsilon \norm{\dl}_q$ where $1 \leq p \leq \infty$ and 
    $\frac 1 p + \frac 1 q = 1$.
\end{lemma}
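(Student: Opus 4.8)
The plan is to formalize the computation already laid out in \eqref{dualnorm}. I would start from the exact definition of the worst-case loss increase, $\delta\loss = \max_{\norm{\dd}\le\epsilon} |\loss(\xx+\dd,c)-\loss(\xx,c)|$, and expand the inner difference to first order in $\dd$. Since $\loss$ is assumed differentiable in $\xx$, Taylor's theorem gives $\loss(\xx+\dd,c)-\loss(\xx,c) = \dl\cdot\dd + o(\norm{\dd})$, so that on the constraint set $\{\norm{\dd}\le\epsilon\}$ the leading contribution is the linear functional $\dd\mapsto\dl\cdot\dd$.

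Next I would reduce the maximization to a dual norm. Because the objective $|\dl\cdot\dd|$ is positively homogeneous of degree one in $\dd$ and the feasible set is just $\epsilon$ times the unit ball of $\norm{\cdot}$, I can factor out $\epsilon$ and write $\max_{\norm{\dd}\le\epsilon}|\dl\cdot\dd| = \epsilon\max_{\norm{\dd}\le 1}|\dl\cdot\dd|$. The right-hand maximum is, by definition, the dual norm $\dn{\dl}$ of $\norm{\cdot}$ evaluated at $\dl$. This yields $\delta\loss = \epsilon\,\dn{\dl}$ at first order, which is precisely the general statement. For the specialization I would then invoke the standard Hölder duality that the dual of the $\lnorm_p$-norm is the $\lnorm_q$-norm with $1/p+1/q=1$; substituting $\dn{\cdot}=\norm{\cdot}_q$ gives the loss increase $\epsilon\norm{\dl}_q$.

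I expect the only genuinely delicate point to be the justification of the first-order step: the identity is exact only asymptotically, since the $o(\norm{\dd})$ remainder can in principle dominate for finite $\epsilon$. I would therefore phrase the result as a first-order (infinitesimal-$\epsilon$) statement, and note that the supremum defining the dual norm is attained by an explicit maximizer --- a vector aligned with $\dl$ when $p=2$, or with the sign pattern of its coordinates when $p=\infty$. Exhibiting such a maximizer serves a double purpose: it certifies the value $\dn{\dl}$ and simultaneously displays the worst-case adversarial direction, which is the qualitative point the surrounding discussion emphasizes (adversarial coordinates add up rather than cancel).
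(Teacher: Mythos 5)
Your proposal is correct and follows essentially the same route as the paper, whose proof of Lemma~\ref{advExAndDualNorm} is precisely the computation displayed in \eqref{dualnorm}: a first-order Taylor expansion, reduction of the constrained maximization to the dual norm by homogeneity, and H\"older duality for the $\lnorm_p$/$\lnorm_q$ specialization. Even your closing remarks --- the caveat that higher-order terms may dominate for finite $\epsilon$, and the explicit maximizers aligned with $\dl$ (for $p=2$) or with $\sign$ of its coordinates (for $p=\infty$) --- mirror the paper's own surrounding discussion.
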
%
Although the lemma is valid at first order only, it proves that \emph{at
least} this kind of first-order vulnerability is present. Moreover, we will see
that the first-order predictions closely match the experiments, and that simple
gradient regularization helps protecting even against iterative
(non-first-order) attack methods (Figure~\ref{fig:normDependence}).

\paragraph{Calibrating the threshold $\epsilon$ to the attack-norm $\norm{\cdot}$.}
Lemma~\ref{advExAndDualNorm} shows that adversarial vulnerability depends on
three main factors:
\begin{enumerate*}[label=(\roman*), nosep, itemindent=4ex]
    \item \label{it1} $\norm{\cdot}$\,, the norm chosen for the attack
    \item \label{it2} $\epsilon$\,, the size of the attack, and
    \item \label{it3} $\EE{\xx}{\dn{\dl}}$\,, the expected \emph{dual} norm of $\dl$. 
\end{enumerate*}
We could see Point~\ref{it1} as a measure of our sensibility to
image perturbations, \ref{it2} as our sensibility
threshold, and \ref{it3} as the classifier's expected marginal sensibility to a unit 
perturbation. $\EE{\xx}{\dn{\dl}}$ hence intuitively captures the discrepancy between our
perception (as modeled by $\norm{\cdot}$) and
the classifier's perception for an input-perturbation of small size $\epsilon$.
Of course, this viewpoint supposes that we actually found a norm $\norm{\cdot}$
(or more generally a metric) that faithfully reflects human perception --~a
project in its own right, far beyond the scope of this paper.
However, it is clear that the threshold $\epsilon$ that we choose should depend
on the norm $\norm{\cdot}$ and hence on the input-dimension $d$. 
In particular, for a given pixel-wise order of magnitude
of the perturbations $\dd$, the $\lnorm_p$-norm of the perturbation will scale like
$d^{1/p}$. This suggests to write the
threshold $\epsilon_p$ used with $\ell_p$-attacks as:
\begin{equation}\label{rescale}
    \epsilon_p = \epsilon_\infty \, d^{1/p} \, ,
\end{equation}
where $\epsilon_\infty$ denotes a dimension independent constant. In
Appendix~\ref{perception} we show that this scaling also preserves the average
signal-to-noise ratio $\norm{\xx}_2 / \norm{\dd}_2$, both across norms and
dimensions, so that $\epsilon_p$ could correspond to a constant human perception-threshold.
With this in mind, the impatient reader may already jump to
Section~\ref{sec:grad_estimation}, which contains our main contributions:
the estimation of $\EE{x}{\norm{\dl}_q}$ for standard feedforward nets.
Meanwhile, the rest of this section shortly discusses two straightforward
defenses that we will use later and that further illustrate the role of
gradients.

\paragraph{A new old regularizer.} Lemma~\ref{advExAndDualNorm} shows
that the loss of the network after an
$\frac \epsilon 2$-sized $\norm{\cdot}$-attack is 
\begin{equation}\label{newloss}
    \loss_{\epsilon, \dn{\cdot}}(\xx,c)
        := \loss(\xx,c) + \frac{\epsilon}{2} \,  \dn{\dl} \, .
\end{equation}
It is thus natural to take this loss-after-attack as a new training objective.
Here we introduced a factor $2$ for reasons that will become clear in a
moment. Incidentally, for $\norm{\cdot} = \norm{\cdot}_2$, this new loss
reduces to an old regularization-scheme proposed by \citet{drucker91double}
called \emph{double-backpropagation}.
At the time, the authors argued that slightly decreasing a function's or a classifier's
sensitivity to input perturbations should improve generalization.
In a sense, this is exactly our motivation when defending against adversarial examples.
It is thus not surprising to end up with the same regularization term.  
Note that our reasoning only shows that training with one specific norm
$\dn{\cdot}$ in \eqref{newloss} helps to protect against adversarial
examples generated from $\norm{\cdot}$. A priori, we do not know what will
happen for attacks generated with other norms; but our experiments suggest that training
with one norm also protects against other attacks (see
Figure~\ref{fig:results} and Section~\ref{penResults}).

\paragraph{Link to adversarially augmented training.}
In \eqref{ae_def}, $\epsilon$ designates an attack-size threshold, while in
\eqref{newloss}, it is a regularization-strength. Rather than a
notation conflict, this reflects an intrinsic duality between two
complementary interpretations of $\epsilon$, which we now investigate further.
Suppose that, instead of using the loss-after-attack,
we augment our training set with $\epsilon$-sized $\norm{\cdot}$-attacks
$\xx+\dd$, where for each training point $\xx$, the perturbation $\dd$ is
generated on the fly to locally maximize the loss-increase.
Then we are effectively training with
\begin{equation}\label{fgsmLoss}
    \tilde{\loss}_{\epsilon, \norm{\cdot}}(\xx,c) 
        := \frac 1 2 (\loss(\xx,c) + \loss(\xx+\epsilon \,  \dd, c)) \, ,
\end{equation}
where by construction $\dd$ satisfies \eqref{dualnorm}.
We will refer to this technique as \emph{adversarially augmented training}.
It was first introduced by \citet{goodfellow15explaining} with
$\norm{\cdot} = \norm{\cdot}_\infty$ under the name of
FGSM\footnote{\emph{F}ast \emph{G}radient \emph{S}ign \emph{M}ethod}-augmented
training.
Using the first order Taylor expansion in $\epsilon$ of
\eqref{dualnorm}, this `old-plus-post-attack' loss of
\eqref{fgsmLoss} simply reduces to our loss-after-attack, which proves
\begin{proposition}\label{lossEqui}
    Up to first-order approximations in $\epsilon$, 
    $ 
        \tilde{\loss}_{\epsilon, \norm{\cdot}} = \loss_{\epsilon, \dn{\cdot}} \, .
    $ 
    Said differently, for small enough $\epsilon$,
    adversarially augmented training with $\epsilon$-sized $\norm{\cdot}$-attacks
    amounts to penalizing the \emph{dual} norm $\dn{\cdot}$ of $\dl$ with weight
    $\epsilon/2$.
    In particular, double-backpropagation corresponds to training with
    $\ell_2$-attacks, while FGSM-augmented training corresponds to an
    $\ell_1$-penalty on $\dl$.
\end{proposition}
This correspondence between training with perturbations and using a regularizer
can be compared to Tikhonov regularization: Tikhonov regularization amounts to
training with \emph{random} noise \citet{bishop95training}, while training with
\emph{adversarial} noise amounts to penalizing $\dl$. 
Section~\ref{penResults}
verifies the correspondence between adversarial augmentation and gradient
regularization empirically, which also strongly suggests the empirical validity
of the first-order Taylor expansion in~\eqref{dualnorm}.

\section{Estimating \texorpdfstring{$\norm{\dl}_q$}{|d\_x d\,L|\_q} to Evaluate Adversarial Vulnerability\label{sec:grad_estimation}}

In this section, we evaluate the size of $\norm{\dl}_q$ for a very wide class
of standard network architectures. We show that, inside this class, the
gradient-norms are independent of the network topology and increase with input
dimension. We start with an intuitive explanation of these insights
(Sec~\ref{sec:coreidea}) before moving to our formal statements
(Sec~\ref{sec:theorems}).

\subsection{Core Idea: One Neuron with Many Inputs\label{sec:coreidea}}

This section is for intuition only: no assumption made here is used later.
We start by showing how changing $q$ affects the size of
$\norm{\dl}_q$. Suppose for a moment that the coordinates of $\dl$ have
typical magnitude $|\dl|$. Then $\norm{\dl}_q$ scales like $d^{1/q} |\dl|$.
Consequently
\begin{equation}\label{res_dependance}
    \epsilon_p \norm{\dl}_q \
        \propto \ \epsilon_p \, d^{1/q} \, |\dl| \ 
        \propto \ d \, |\dl| \, .
\end{equation}
This equation carries two important messages. First, we see how $\norm{\dl}_q$
depends on $d$ and $q$. The dependence seems highest for $q=1$. But once we
account for the varying perceptibility threshold $\epsilon_p \propto
d^{1/p}$, we see that adversarial vulnerability scales like $d \cdot |\dl|$,
whatever $\lnorm_p$-norm we use.
Second, \eqref{res_dependance} shows that to be robust against any type
of $\lnorm_p$-attack at any input-dimension $d$, the average absolute value of
the coefficients of $\dl$ must grow slower than $1/d$. Now, here is the
catch, which brings us to our core insight.

In order to preserve the activation variance of the neurons from layer
to layer, the neural weights are usually initialized with a variance that is
inversely proportional to the number of inputs per neuron. 
Imagine for a moment that the network consisted only of one output neuron $o$
linearly connected to all input pixels. For the purpose of this example,
we assimilate $o$ and $\loss$. Because we initialize the weights with
a variance of $1/d$, their average absolute value $|\bm \partial_x o| \equiv |\dl|$
grows like $1/\sqrt d$, rather than the required $1/d$. 
By \eqref{res_dependance}, the adversarial vulnerability $\epsilon
\norm{\bm \partial_x o}_q \equiv \epsilon \norm{\dl}_q$ therefore increases
like ${d / \sqrt d = \sqrt d}$. 

\emph{This toy example shows that the standard initialization scheme, which preserves
the variance from layer to layer, causes the average coordinate-size $|\dl|$ to
grow like $1/\sqrt d$ instead of $1/d$. When an $\lnorm_\infty$-attack tweaks its
$\epsilon$-sized input-perturbations to align with the coordinate-signs of
$\dl$, all coordinates of $\dl$ add up in absolute value, resulting in an
output-perturbation that scales like $\epsilon \sqrt d$ and leaves the network
increasingly vulnerable with growing input-dimension.}

\subsection{Formal Statements for Deep Networks\label{sec:theorems}}

Our next theorems formalize and generalize the previous toy example to a very
wide class of feedforward nets with ReLU activation functions.  For
illustration purposes, we start with fully connected nets before proceeding
with the broader class, which includes any succession of (possibly strided)
convolutional layers.  In essence, the proofs iterate our insight on one layer
over a sequence of layers.  They all rely on the following set
\begin{enumerate*}[label=($\mathcal H$)]
    \item \label{H}\!
\end{enumerate*}
of hypotheses:
\begin{enumerate}[label=H\arabic*, nosep]
    \item \label{h1} Non-input neurons are followed by a ReLU
    killing half of its inputs, independently of the weights.
    \item \label{h5} Neurons are partitioned into layers, meaning 
    groups that each path traverses at most once.
    \item \label{h2} All weights have $0$ expectation and variance
    $2/(\text{in-degree})$ (`He-initialization').
    \item \label{h3} The weights from different layers are independent.
    \item \label{h4} Two distinct weights $w, w'$ from a same node satisfy
    $\e{w \, w'} = 0$.
\end{enumerate}
If we follow common practice and initialize our nets as proposed by
\citet{he15delving}, then~\ref{h2}-\ref{h4} are satisfied at
initialization by design, while \ref{h1} is usually a very good
approximation \citep{balduzzi17neural}. 
Note that such i.i.d.\ weight assumptions have been widely used to analyze
neural nets and are at the heart of very influential and
successful prior work (e.g., equivalence between neural nets and Gaussian processes as
pioneered by \citealt{neal96thesis}).
Nevertheless, they do not hold after training.
That is why all our statements in this
section are to be understood as \emph{orders of magnitudes} that are very well
satisfied at initialization both in theory and practice, and that we will confirm
experimentally for trained networks in Section~\ref{experiments}. Said differently,
while our theorems rely on the statistics of neural nets at initialization, our
experiments confirm their conclusions after training.

\begin{theorem}[\textbf{Vulnerability of Fully Connected Nets}]\label{fullcon}
    Consider a succession of fully connected layers with ReLU activations
    which takes inputs $\xx$ of dimension $d$, 
    satisfies assumptions \H, and outputs logits $f_k(\xx)$ that get fed to a final
    cross-entropy-loss layer $\loss$. Then the coordinates of $\dfk$ grow like
    $1/\sqrt d$, and
    \begin{equation}\label{dl_norm}
        \norm{\dl}_q \propto d^{\frac{1}{q} - \frac{1}{2}}
        \quad \mathrm{and} \quad
        \epsilon_p \norm{\dl}_q \propto \sqrt d \, .
    \end{equation}
    These networks are thus increasingly vulnerable to
    $\lnorm_p$-attacks with growing input-dimension.
\end{theorem}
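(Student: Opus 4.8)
The plan is to evaluate the typical coordinate-size of $\dfk$ by backpropagation and then to propagate second moments layer by layer. Writing $W^\ell$ for the weight matrix of layer $\ell$ (width $n_\ell$, in-degree $n_{\ell-1}$, with $n_0 = d$) and $D^\ell$ for the diagonal $0/1$ matrix recording which ReLUs of layer $\ell$ are active, the chain rule gives
\[
    \dfk = (W^1)^\top D^1 (W^2)^\top D^2 \cdots (W^{L-1})^\top D^{L-1} (W^L_k)^\top .
\]
First I would introduce the backpropagated gradient $\delta^\ell := \partial f_k / \partial z^\ell$, so that $\delta^{\ell-1} = (W^\ell)^\top D^\ell \, \delta^\ell$ and $\delta^0 = \dfk$, and track the scalar $m_\ell := \e{(\delta^\ell_j)^2}$ (uniform over coordinates $j$ by the symmetry of the i.i.d.\ initialization). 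Expanding $\delta^{\ell-1}_i = \sum_j W^\ell_{ji} D^\ell_{jj}\, \delta^\ell_j$ and squaring produces a diagonal part and off-diagonal cross terms; the whole computation hinges on the latter vanishing.

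Assuming the cross terms vanish, the diagonal part collapses neatly. Using \H, the weight $W^\ell$ is independent both of the mask $D^\ell$ (by \ref{h1}) and of $\delta^\ell$, which by \ref{h1} and \ref{h3} depends only on the independent randomness of the layers above $\ell$; moreover $\e{(W^\ell_{ji})^2} = 2/n_{\ell-1}$ (by \ref{h2}) and $\e{(D^\ell_{jj})^2} = \e{D^\ell_{jj}} = \tfrac12$ (by \ref{h1}, since $D^\ell_{jj} \in \{0,1\}$). Summing the $n_\ell$ diagonal terms gives the recursion
\[
    m_{\ell-1} = n_\ell \cdot \frac{2}{n_{\ell-1}} \cdot \frac12 \cdot m_\ell = \frac{n_\ell}{n_{\ell-1}}\, m_\ell .
\]
The output layer contributes the base case $m_{L-1} = \e{(W^L_{ki})^2} = 2/n_{L-1}$, so a one-line induction yields $m_\ell = 2/n_\ell$ at every depth, and in particular $m_0 = 2/d$. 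Hence each coordinate of $\dfk$ has typical magnitude $\sqrt{2/d} \propto 1/\sqrt d$, matching the one-neuron intuition of Section~\ref{sec:coreidea}.

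It then remains to pass from the logits to the loss and to assemble the norm. Differentiating the cross-entropy gives $\dl = \sum_k (p_k - \mathds{1}[k=c])\,\dfk$, with $p = \mathrm{softmax}(f)$; the coefficients are bounded by $1$ and their number (the number of classes) is fixed independently of $d$, so the coordinates of $\dl$ inherit the $1/\sqrt d$ order of magnitude. Summing over the $d$ coordinates,
\[
    \norm{\dl}_q = \Big( \sum_{i=1}^d |\dli|^q \Big)^{1/q}
        \propto \big( d\cdot d^{-q/2} \big)^{1/q} = d^{\frac1q - \frac12},
\]
which is the first claim. Substituting the calibrated threshold $\epsilon_p = \epsilon_\infty\, d^{1/p}$ from~\eqref{rescale} and using $\tfrac1p + \tfrac1q = 1$ then gives $\epsilon_p \norm{\dl}_q \propto d^{1/p + 1/q - 1/2} = \sqrt d$, the second claim.

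I expect the main obstacle to be justifying that the off-diagonal cross terms drop out. Each such term, for $j \neq j'$, carries the factor $\e{W^\ell_{ji} W^\ell_{j'i}}$, which vanishes precisely because $W^\ell_{ji}$ and $W^\ell_{j'i}$ are two distinct weights leaving the same node $i$ (hypothesis~\ref{h4}); the remaining independence needed to factor $\e{D^\ell_{jj}}$ and $m_\ell$ out of each product comes from \ref{h1} and \ref{h3}, the former being essential since it is what decouples the forward ReLU masks from the weights they nominally depend on. Equally delicate is arguing that $m_\ell$ is genuinely uniform across the coordinates of a layer so that the scalar recursion closes, and that the $\dfk$ for different $k$ recombine without smuggling a hidden $d$-dependence into the cross-entropy coefficients. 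Without the full set \H, the clean telescoping $m_\ell = 2/n_\ell$ would no longer hold.
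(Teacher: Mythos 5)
Your proof is correct in substance but follows a genuinely different route from the paper's. The paper expands each coordinate $\dfik$ as a sum over \emph{paths}, $\dfik = \sum_{\pp \in \P(x,k)} \omega_\pp \sigma_\pp$, kills the cross-path terms with a dedicated lemma (Lemma~\ref{decor}, built on \ref{h3}--\ref{h4}), and then counts paths so that the He-variances $2/d_{p-1}$ telescope against the layer sizes, giving $\e{\dfik^2} = 1/d$ in one global computation. You instead run a \emph{layerwise second-moment recursion} on the backpropagated gradient, $m_{\ell-1} = \frac{n_\ell}{n_{\ell-1}} m_\ell$, which is the same telescoping organized locally rather than globally; your off-diagonal cross terms $\e{W^\ell_{ji} W^\ell_{j'i}} = 0$ are precisely the degenerate, same-layer case of the paper's Lemma~\ref{decor} (two weights leaving a common node, \ref{h4}). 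For fully connected nets the two computations are equivalent, and yours is arguably more elementary and closer to standard signal-propagation analyses. What the paper's path formalism buys is generality: Theorem~\ref{generaltheo} extends the same argument to arbitrary feedforward DAGs via a random-walk identity on path-degrees (Lemmas~\ref{dag} and~\ref{symdag}), where no layerwise recursion of your form is available; your scalar $m_\ell$ exists only because \ref{h5}-style layer structure makes the second moment uniform within a layer, as you note. (The factor-$2$ discrepancy between your $m_0 = 2/d$ and the paper's $1/d$ is immaterial bookkeeping about whether the logit node carries a ReLU factor.)

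One step you flag but do not actually close: the passage from $\dfk$ to $\dl$. Your argument --- coefficients $p_k - \mathds{1}[k=c]$ bounded by $1$, fixed number $K$ of classes --- only gives the \emph{upper} bound $|\dli| \lesssim 1/\sqrt d$. Since these coefficients sum to zero, $\dl = \sum_k q_k (\dfc - \dfk)$, so if the $\dfk$ were strongly correlated across $k$ the leading order could cancel entirely, and the claimed $\propto$ would fail. The paper rules this out by observing (again via Lemma~\ref{decor}: paths to distinct logits are distinct, so their path-products are centered and uncorrelated) that the $\dfik$ are $K$ uncorrelated zero-mean variables, whence by \eqref{XentrGradientUgly} the variance of $\dli$ is $\bigl((1-q_c)^2 + \sum_{k \neq c} q_k^2\bigr)/d$, which is $\Theta(1/d)$ whenever the softmax outputs are non-degenerate (as at initialization). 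The fix is available inside your own toolbox --- $W^L_{ki}$ and $W^L_{k'i}$ leave the same node $i$, so \ref{h4} makes $\dfik$ and $\partial_x f_{k'}$ uncorrelated --- but it must be said, and your write-up currently leaves it as a worry rather than a step.
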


Theorem~\ref{fullcon} is a special case of the next theorem, which will show
that the previous conclusions are essentially independent of the
network-topology.
We will use the following
symmetry assumption on the neural connections. For a given path $\pp$, let the
\emph{path-degree} $d_\pp$ be the multiset of encountered in-degrees
along path $\pp$. For a fully connected network, this is the unordered sequence
of layer-sizes preceding the last path-node, including the input-layer.
Now consider the multiset $\{d_\pp\}_{\pp \in \P(x,o)}$ of all path-degrees
when $\pp$ varies among all paths from input $x$ to output $o$.
The symmetry assumption (relatively to $o$) is
\begin{enumerate}[label=($\mathcal S$), nosep]
    \item \label{S} All input nodes $x$ have the same multiset $\{d_\pp\}_{\pp \in
    \P(x,o)}$ of path-degrees from $x$ to $o$.
\end{enumerate}
Intuitively, this means that the statistics of degrees encountered along paths to
the output are the same for all input nodes.
This symmetry assumption is exactly satisfied by fully
connected nets, almost satisfied by CNNs (up to boundary effects, which can be
alleviated via periodic or mirror padding) and exactly satisfied by strided layers,
if the layer-size is a multiple of the stride.


\begin{theorem}[\textbf{Vulnerability of Feedforward Nets}]\label{generaltheo}
    Consider any feedforward network with linear connections and ReLU
    activation functions.
    Assume the net satisfies assumptions \H\ and outputs logits
    $f_k(\xx)$ that get fed to the  cross-entropy-loss $\loss$. Then
    $\norm{\bm{\partial_{\xx}} f_k}_2$ is independent of the input dimension
    $d$ and $\epsilon_2 \norm{\dl}_2 \propto \sqrt d$.
    Moreover, if the net satisfies the symmetry assumption \S, then 
    $|\dfik| \propto 1/\sqrt d$
    and \eqref{dl_norm} still holds: $\norm{\dl}_q \propto
    d^{\frac{1}{q} - \frac{1}{2}}$ and $\epsilon_p \norm{\dl}_q \propto \sqrt
    d$.
\end{theorem}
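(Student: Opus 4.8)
The plan is to expand the input-to-output gradient by the chain rule as a sum over directed paths and to track its second moment. For a ReLU net one has, for each input coordinate $i$, $\partial_{x_i} f_k = \sum_{\pp : x_i \to f_k} \big(\prod_{e \in \pp} w_e\big)\big(\prod_{v \in \pp} g_v\big)$, where $w_e$ runs over the weights along the path and $g_v \in \{0,1\}$ is the ReLU gate at each internal node. The quantity that governs vulnerability is $\e{(\partial_{x_i} f_k)^2}$, and the whole argument is an exercise in controlling it under assumptions \H.

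Rather than enumerate paths directly, I would first establish a one-layer recursion. Writing $a_v$ for the post-activation of node $v$, the chain rule reads $\partial_{x_i} a_v = g_v \sum_{u \to v} w_{uv}\,\partial_{x_i} a_u$. Squaring and taking expectations, the gate factors out with $\e{g_v} = 1/2$ by \ref{h1} (using $g_v^2 = g_v$ and independence from the weights), the off-diagonal terms $u \neq u'$ vanish because $w_{uv}, w_{u'v}$ are distinct weights into the same node, hence uncorrelated by \ref{h4} and independent of the upstream factors by \ref{h3}, and the diagonal terms use $\e{w_{uv}^2} = 2/d_v$ from the He-variance \ref{h2}. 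This yields $\e{(\partial_{x_i} a_v)^2} = \tfrac{1}{d_v}\sum_{u \to v}\e{(\partial_{x_i} a_u)^2}$. Summing over $i$ and setting $S_v := \sum_i \e{(\partial_{x_i} a_v)^2}$ gives the conservation law $S_v = \tfrac{1}{d_v}\sum_{u \to v} S_u$: the sensitivity of a node is the \emph{average} of those of its predecessors. Since $S_{x_i} = 1$ at the inputs, induction along the layering \ref{h5} forces $S_v$ to a dimension-independent constant at every node (the output picks up only a constant factor from the absent ReLU), so $\norm{\dfk}_2^2 = S_{f_k}$ is independent of $d$. Back-propagating through the cross-entropy multiplies each logit gradient by $p_k - y_k \in [-1,1]$, bounded and dimension-independent, so $\norm{\dl}_2$ is of the same order; with the calibration $\epsilon_2 = \epsilon_\infty\sqrt d$ of \eqref{rescale} this gives $\epsilon_2\norm{\dl}_2 \propto \sqrt d$.

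For the sharper per-coordinate claim I would unroll the same recursion without summing over $i$. Writing $s_v^{(i)} := \e{(\partial_{x_i} a_v)^2}$, the recursion $s_v^{(i)} = \tfrac{1}{d_v}\sum_{u\to v} s_u^{(i)}$ with base case $s_{x_j}^{(i)} = \delta_{ij}$ telescopes to $\e{(\partial_{x_i} f_k)^2} \propto \sum_{\pp : x_i \to f_k}\prod_{d' \in d_\pp}\tfrac{1}{d'}$, each surviving path contributing the product of reciprocal in-degrees in its path-degree multiset $d_\pp$. Under the symmetry assumption \S\ every input node shares the same multiset $\{d_\pp\}_{\pp \in \P(x_i,f_k)}$, so this sum is identical across $i$; each coordinate then has the same second moment, necessarily $S_{f_k}/d$ by Part~1, whence $|\dfik| \propto 1/\sqrt d$. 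Summing $d$ equal coordinates gives $\norm{\dfk}_q \propto (d\cdot d^{-q/2})^{1/q} = d^{1/q - 1/2}$; the same scaling passes to $\dl$, and $\epsilon_p\norm{\dl}_q \propto d^{1/p + 1/q - 1/2} = \sqrt d$ since $\tfrac1p + \tfrac1q = 1$.

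I expect the delicate part to be the per-layer cancellation rather than the final exponent arithmetic. It requires invoking \ref{h4} and \ref{h3} to discard exactly the off-diagonal terms, and \ref{h1} to pull out the gate — and here one must treat $g_v$ as independent not only of the weights but of the whole upstream gradient $\partial_{x_i} a_u$, which is where assumption \ref{h1} does its real work. Once the recursion $S_v = \mathrm{avg}(S_u)$ is secured, both the topology-independence of $\norm{\dfk}_2$ and the $\sqrt d$ scaling of the vulnerability follow almost mechanically, and the role of \S\ is merely to upgrade the statement about the sum of squared coordinates to one about each coordinate separately.
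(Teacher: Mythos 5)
Your proposal is correct and proves everything the theorem claims, but it organizes the key computation differently from the paper. The paper expands $\partial_x o$ globally as a sum of path-products $\sum_{\pp \in \P(x,o)} \omega_\pp \sigma_\pp$, kills all cross-terms with a decorrelation lemma (Lemma~\ref{decor}, using \ref{h3}--\ref{h4}), and then evaluates the surviving combinatorial sum $\sum_{\pp}\prod_{p \in \tilde\pp} 1/d_p$ by a backward random walk from $o$ that picks incoming edges uniformly: conservation of probability gives $1$ when summed over all inputs (Lemma~\ref{dag}), and the symmetry assumption \S\ splits this equally into $1/d$ per coordinate (Lemma~\ref{symdag}). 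Your layer-wise recursion $S_v = \frac{1}{d_v}\sum_{u\to v} S_u$ is precisely that random walk rewritten as an inductive conservation law along the layering \ref{h5}, and your telescoped per-coordinate sum is the identical path-degree quantity, so the two arguments establish the same identity by different means. Your route buys a more direct proof of the first claim --- dimension-independence of $\norm{\dfk}_2$ falls out of the conservation law immediately, with no multiset bookkeeping --- whereas the paper's explicit path-product formulation pays off later when deterministic average-pooling weights are inserted into the $\omega_\pp$ (Theorem~\ref{avgpooltheo}), which a pure variance recursion handles less transparently. One step where you are looser than the paper: your treatment of the loss layer (``multiplies each logit gradient by $p_k - y_k$, bounded'') only yields an upper bound of the right order; to get a genuine $\propto$, the paper's Step~2 additionally uses that the $\dfik$ are centered and mutually uncorrelated across classes $k$ (again via Lemma~\ref{decor}, since paths to distinct logits are distinct paths from a same input), so that $\dli$ has variance $\big((1-q_c)^2 + \sum_{k\neq c} q_k^2\big)/d$ and no cancellation can occur in the linear combination --- a prefactor that is $\Theta(1)$ at initialization though it shrinks for a confident classifier, as the paper notes in Remark~\ref{remOnTraining}. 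You correctly flag the two genuinely delicate points (independence of the gate $g_v$ from the entire upstream gradient via \ref{h1}, and the ReLU-free output node contributing only a constant), so this looseness is minor at the order-of-magnitude level at which the theorem is stated.
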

Theorems~\ref{fullcon} and~\ref{generaltheo} are proven in
Appendix~\ref{proofs}. The main proof idea is that in the gradient norm computation,
the He-initialization exactly compensates the combinatorics of the number of
paths in the network, so that this norm becomes independent of the network topology.
In particular, we get
\begin{corollary}[\textbf{Vulnerability of CNNs}]\label{convnet}
    In any succession of convolution and dense layers, strided or not,
    with ReLU activations, that satisfies
    assumptions \H\ and outputs logits that get fed to the cross-entropy-loss
    $\loss$, the gradient of the logit-coordinates scale like $1/\sqrt d$ and
    \eqref{dl_norm} is satisfied. It is hence
    increasingly vulnerable with growing input-resolution to attacks generated
    with any $\lnorm_p$-norm.
\end{corollary}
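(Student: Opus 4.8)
The plan is to obtain this corollary as a direct specialization of Theorem~\ref{generaltheo}, since a convolutional architecture is nothing but a structured instance of the feedforward nets treated there. First I would observe that every convolutional layer, strided or not, is a \emph{linear} connection: it is a weight-shared linear map from the previous feature map, with ReLU applied afterwards. Hence a succession of convolution and dense layers interleaved with ReLUs is precisely ``a feedforward network with linear connections and ReLU activation functions'' in the sense of Theorem~\ref{generaltheo}. Because assumptions \H\ are assumed to hold here by hypothesis, the first half of Theorem~\ref{generaltheo} applies with no additional work, giving immediately that $\norm{\dfk}_2$ is independent of $d$ and $\epsilon_2 \norm{\dl}_2 \propto \sqrt d$.

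To upgrade this to the coordinate-wise statement $\dfik \propto 1/\sqrt d$ and to \eqref{dl_norm} for every $\lnorm_p$-norm, I would verify the symmetry assumption \S\ for convolutional nets. The enabling fact is weight sharing: all neurons of a given convolutional layer have the same in-degree, namely filter-size times number of input channels, so the in-degree encountered at a fixed depth is one and the same constant along every path. It then remains to show that the \emph{multiset} $\{\dset\}_{\pp \in \P(x,o)}$ is identical for all input pixels $x$. Here I would invoke translation invariance: for interior pixels the local connectivity at each layer is the same up to a spatial shift, so the number of input-to-output paths threading a neuron of each given in-degree is independent of which interior pixel we start from; dense layers, being all-to-all, contribute symmetrically and pose no difficulty. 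For strided layers the same counting is exactly uniform precisely under the stated condition that the layer-size be a multiple of the stride, so that the subsampling tiles the spatial grid evenly.

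The main obstacle will be the \emph{boundary} pixels, which genuinely violate \S: pixels near the image edge are covered by fewer filter positions and therefore participate in strictly fewer paths, giving them slightly smaller path-degree multisets. I would dispose of this exactly as anticipated in the discussion preceding the theorem. With periodic or mirror padding, translation invariance is restored exactly and \S\ holds on the nose; without padding, the offending pixels form only an $O(\sqrt d)$-sized perimeter against a bulk of $\Theta(d)$ interior pixels, so they contribute merely a lower-order correction and the $1/\sqrt d$ coordinate scaling survives as an order of magnitude. Feeding the verified assumptions \H\ and \S\ into Theorem~\ref{generaltheo} then yields $\dfik \propto 1/\sqrt d$, $\norm{\dl}_q \propto d^{\frac1q - \frac12}$ and $\epsilon_p \norm{\dl}_q \propto \sqrt d$, which establishes the claimed increasing vulnerability to $\lnorm_p$-attacks with growing input-resolution.
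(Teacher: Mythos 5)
Your proposal is correct and follows exactly the paper's route: Corollary~\ref{convnet} is stated as an immediate specialization of Theorem~\ref{generaltheo}, with the observations you make explicit---convolutions are linear connections, weight sharing and translation invariance give the symmetry assumption \S\ up to boundary effects (exact under periodic/mirror padding, and exact for strided layers when the layer-size is a multiple of the stride)---being precisely the remarks the paper makes in the paragraph introducing \S. Your quantitative note that unpadded boundary pixels form only an $O(\sqrt d)$ perimeter against a $\Theta(d)$ bulk is a welcome sharpening of the paper's informal ``almost satisfied'' claim, but the argument is otherwise the same.
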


\paragraph{Remarks.} 
\begin{itemize}[nosep,wide] 
    \item Appendix~\ref{poolsec} shows that the network gradients are dampened
    when replacing strided layers by average poolings, essentially because
    average-pooling weights do not follow the He-init assumption \ref{h2}.
    \item Although the principles of our analysis naturally extend to residual
    nets, they are not yet covered by our theorems (residual connections do not
    satisfy \ref{h2}).
    \item Current weight initializations (He-, Glorot-, Xavier-) are chosen to
    preserve the variance from layer to layer, which constrains their scaling
    to $\nicefrac{1}{\sqrt{\text{in-degree}}}$. This scaling, we show, is
    incompatible with small gradients. But decreasing gradients simply by
    reducing the initial weights would kill the output signal and make training
    impossible for deep nets \citep[Sec 2.2]{he15delving}. Also note that
    rescaling all weights by a constant does not change the classification
    decisions, but it affects cross-entropy and therefore adversarial damage.
\end{itemize}

\section{Empirical Results\label{experiments}}

\begin{figure*}[htb]
    \centering
    \includegraphics[width=\linewidth]{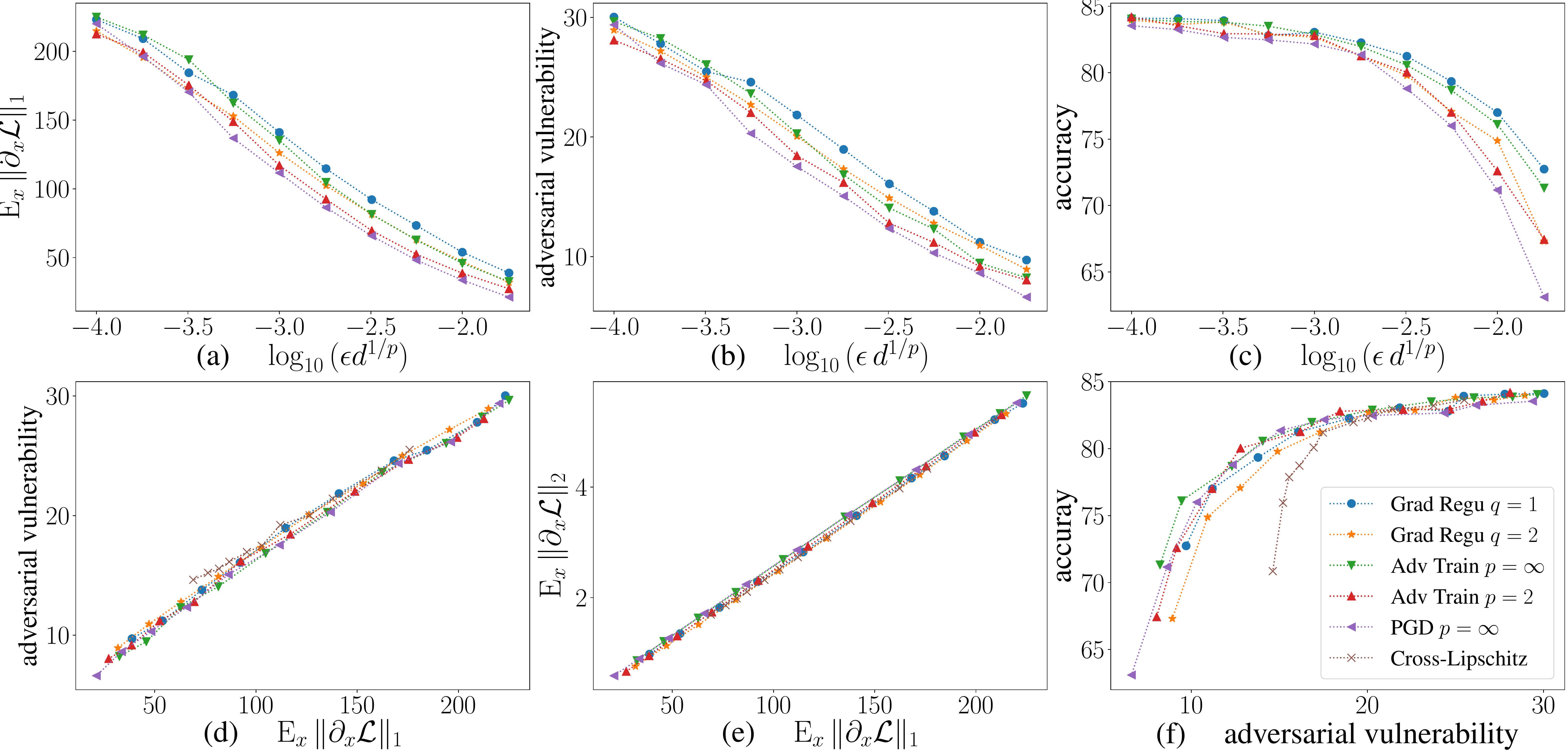}
    \vspace{-2ex}
    \caption{Average norm $\EE{\xx}{\norm{\dl}}$ of the loss-gradients,
    adversarial vulnerability and accuracy (before attack) of various networks trained
    with different adversarial regularization methods and regularization
    strengths $\epsilon$. Each point represents a trained network, and each
    curve a training-method.
    \emph{Upper row}: A priori, the regularization-strengths $\epsilon$ have
    different meanings for each method. The near superposition of all upper-row curves 
    illustrates $(i)$ the duality between
    adversarial augmentation and gradient regularization (Prop.\ref{lossEqui}) and $(ii)$
    confirms the rescaling of $\epsilon$ proposed in \eqref{rescale} and
    $(iii)$ supports the validity of the first-order Taylor
    expansion~\eqref{dualnorm}.
    $(d)$: near functional relation between adversarial vulnerability and
    average loss-gradient norms.
    $(e)$: the near-perfect linear relation between the $\E{\norm{\dl}_1}$ and
    $\E{\norm{\dl}_2}$ suggests that protecting against a given
    attack-norm also protects against others.
    $(f)$: Merging \fb and \fc\ shows that
    all adversarial augmentation and gradient regularization
    methods achieve similar accuracy-vulnerability
    trade-offs.\label{fig:results}
    }
    \vspace{-2ex}
\end{figure*}

Section~\ref{penResults} empirically verifies the validity of the
first-order Taylor approximation made in~\eqref{dualnorm} and the
correspondence between gradient regularization and adversarial augmentation
(Fig.\ref{fig:results}). Section~\ref{resoDependance} analyzes
the dimension dependence of the average gradient-norms and adversarial
vulnerability after usual and robust training. Section~\ref{penResults} uses an
attack-threshold $\epsilon_\infty = 0.5\%$ of the pixel-range (invisible to
humans), with PGD-attacks from the Foolbox-package \citep{rauber17foolbox}.
Section~\ref{resoDependance} uses self-coded PGD-attacks with random start with
$\epsilon_\infty = 0.08\%$. As a safety-check, other attacks were tested as
well (see App.\ref{penResults} \& Fig.\ref{fig:normDependence}), but results remained essentially unchanged. Note that the
$\epsilon_\infty$-\emph{thresholds} should not be confused with the
\emph{regularization-strengths} $\epsilon$ appearing in~\eqref{newloss}
and~\eqref{fgsmLoss}, which will be varied. The datasets were
normalized ($\sigma\approx .2$). All regularization-values $\epsilon$
are reported in these normalized units (i.e.\ multiply by $.2$ to
compare with 0-1 pixel values). Code available at {\small
\url{https://github.com/facebookresearch/AdversarialAndDimensionality}}.

\subsection{First-Order Approximation, Gradient Penalty, Adversarial Augmentation\label{penResults}}

We train several CNNs with same architecture to classify CIFAR-10 images
\citep{krizhevsky09learning}. For each net, we use a specific training method
with a specific regularization value $\epsilon$. The training methods used were
$\ell_1$- and $\ell_2$-penalization of $\dl$ (Eq.~\ref{newloss}), adversarial
augmentation with $\ell_\infty$- and $\ell_2$- attacks (Eq.~\ref{fgsmLoss}),
projected gradient descent (PGD) with randomized starts (7 steps per attack
with step-size $= .2 \, \epsilon_\infty$; see \citealt{madry18towards}) and the
cross-Lipschitz regularizer (Eq.~\ref{heinregu} in Appendix~\ref{crosslip}).
For this experiment, all networks have 6 `strided convolution $\rightarrow$
batchnorm $\rightarrow$ ReLU' layers with strides [1, 2, 2, 2, 2, 2]
respectively and 64 output-channels each, followed by a final fully-connected
linear layer.  Results are summarized in Figure~\ref{fig:results}. Each curve
represents one training method. Note that our goal here is not to advocate one
defense over another, but rather to check the validity of the Taylor expansion,
and empirically verify that first order terms (i.e., gradients) suffice to
explain much of the observed adversarial vulnerability.

\textbf{Confirming first order expansion and large first-order vulnerability.}
The following observations support the validity of the first order Taylor
expansion in~\eqref{dualnorm} and suggest that it is a crucial component of
adversarial vulnerability:
\begin{enumerate*}[label=(\roman*)]
    \item the efficiency of the first-order defense against iterative
    (non-first-order) attacks (Fig.\ref{fig:results}\&\ref{fig:normDependence}a);
    \item the striking similarity between the PGD curves (adversarial
    augmentation with \emph{iterative} attacks) and the other adversarial
    training training curves (\emph{one-step} attacks/defenses);
    \item the functional-like dependence between any approximation of
    adversarial vulnerability and $\EE{\xx}{\norm{\dl}_1}$
    (Fig.\ref{fig:normDependence}b), and its independence on the training
    method (Fig.\fd).
    \item the excellent correspondence between the gradient regularization and
    adversarial augmentation curves (see next paragraph).
\end{enumerate*}
Said differently, adversarial examples seem indeed to be primarily caused by
large gradients of the classifier as captured via the induced loss.

\textbf{Gradient regularization matches adversarial augmentation  (Prop.\ref{lossEqui}).}
The upper row of Figure~\ref{fig:results} plots $\EE{\xx}{\norm{\dl_1}}$,
adversarial vulnerability and accuracy as a function of $\epsilon \, d^{1/p}$.
The excellent match between the adversarial augmentation curve with $p=\infty$
($p=2$) and its gradient regularization dual counterpart with $q=1$ (resp.\
$q=2$) illustrates the duality between $\epsilon$ as a threshold for
adversarially augmented training and as a regularization constant in the
regularized loss (Proposition~\ref{lossEqui}).  It also supports the validity
of the first-order Taylor expansion in \eqref{dualnorm}.

\begin{figure*}[tb]
    \centering
    \includegraphics[width=\linewidth]{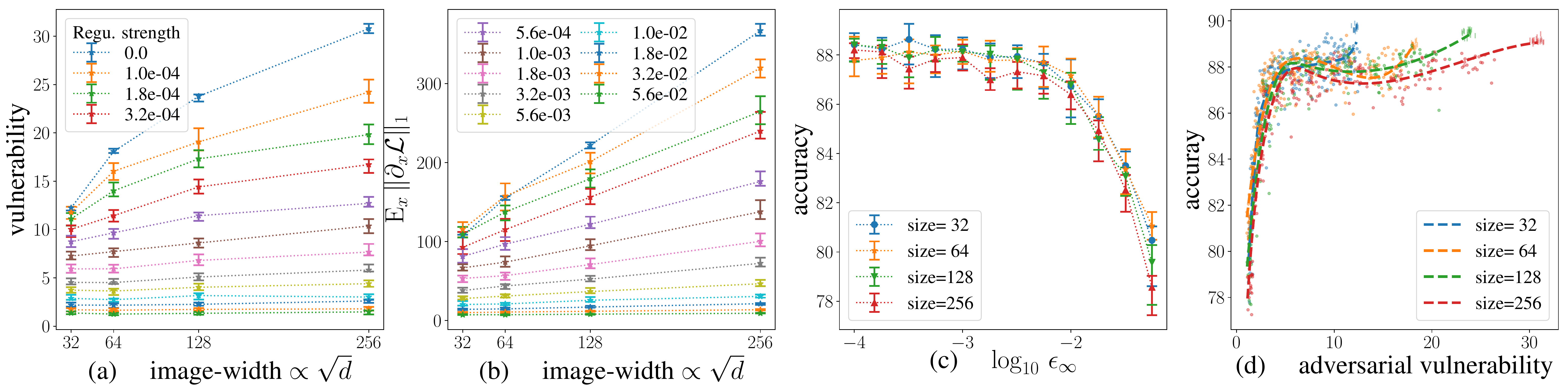}
    \caption{\label{fig:bcifar_summary}Input-dimension dependence of
    adversarial vulnerability, gradient norms and accuracy measured on
    up-sampled CIFAR-10 images. (b) Similar to our theorems' prediction at
    initialization, average gradient norms increase like $\sqrt d$ yielding (a)
    higher vulnerability. Larger PGD-regularization during training can
    significantly dampen this dimension dependence with (c) almost no harm to
    accuracy at first (long plateau on \ffc). Accuracy starts getting damaged
    when the dimension dependence is nearly broken ($\epsilon_\infty \approx
    .0005$). (d) Whatever the input-dimension, PGD-training achieves similar
    accuracy-vulnerability trade-offs. (c) \& (d) suggest that PGD-training
    effectively recovers the original image size, 3x32x32.}
    \vspace{-2ex}
\end{figure*}

\textbf{Confirming correspondence of norm-dependent thresholds (Eq.\ref{rescale}).}
Still on the upper row, the curves for $p=\infty, q=1$ have no reason to match
those for $p=q=2$ when plotted against $\epsilon$, because the
$\epsilon$-threshold is relative to a specific attack-norm. However,
\eqref{rescale} suggested that the rescaled thresholds $\epsilon d^{1/p}$ may
approximately correspond to a same `threshold-unit' across $\ell_p$-norms and
across dimension. This is well confirmed by the upper row plots: by rescaling
the x-axis, the $p=q=2$ and $q=1$, $p=\infty$ curves get almost super-imposed.

\textbf{Accuracy-vulnerability trade-off: confirming large first-order
component of vulnerability.}
Merging Figures~\fb\ and~\fc\ by taking out $\epsilon$, Figure~\ff\ shows that
all gradient regularization and adversarial augmentation methods, \emph{including
iterative ones (PGD)}, yield equivalent accuracy-vulnerability trade-offs. This
suggest that adversarial vulnerability is largely first-order. For higher
penalization values, these trade-offs appear to be much better than those given
by cross Lipschitz regularization.

\textbf{The regularization-norm does not matter.}
We were surprised to see that on Figures~\fd\ and~\ff, the $\loss_{\epsilon,
q}$ curves are almost identical for $q=1$ and $2$. This indicates that both
norms can be used interchangeably in~\eqref{newloss} (modulo proper rescaling
of $\epsilon$ via \eqref{rescale}), and suggests that protecting against a
specific attack-norm also protects against others.  \eqref{res_dependance} may
provide an explanation: if the coordinates of $\dl$ behave like centered,
uncorrelated variables with equal variance --which would follow from
assumptions~\H~--, then the $\lnorm_1$- and $\lnorm_2$-norms of $\dl$ are
simply proportional. Plotting $\EE{x}{\norm{\dl(\xx)}_2}$ against
$\EE{x}{\norm{\dl(\xx)}_1}$ in Figure~\fe\ confirms this explanation. The slope
is independent of the training method. (But Fig~\fge shows that it is not
independent of the input-dimension.) Therefore, penalizing $\norm{\dl(\xx)}_1$
during training will not only decrease $\EE{\xx}{\norm{\dl}_1}$ (as shown in
Figure~\fa), but also drive down $\EE{\xx}{\norm{\dl}_2}$ and vice-versa.

\subsection{Vulnerability's Dependence on Input Dimension\label{resoDependance}}

Theorems~\ref{fullcon}-\ref{generaltheo} and Corollary~\ref{convnet} predict a
linear growth of the average $\lnorm_1$-norm of $\dl$ with the square root of
the input dimension $d$, and therefore an increased adversarial vulnerability
(Lemma~\ref{advExAndDualNorm}).  To test these predictions, we compare the
vulnerability of different PGD-regularized networks when varying the
input-dimension.  To do so, we resize the original 3x32x32 CIFAR-10 images to
32, 64, 128 and 256 pixels per edge by copying adjacent pixels, and train one CNN
for each input-size and regularization strength $\epsilon$.  All nets had the
same amount of parameters and very similar structure across input-resolutions
(see Appendix~\ref{sec:architecture}). All reported values were computed over
the last 20 training epochs on the same held-out test-set.

\textbf{Gradients and vulnerability increase with $\sqrt d$.}
Figures~\ffa\&\ffb summarize the resulting dimension dependence of
gradient-norms and adversarial vulnerability. The dashed-lines follow the
medians of the 20 last epochs and the errorbars show their \nth{10} and
\nth{90} quantiles. Similar to the predictions of our theorems at
initialization, we see that, even after training, $\ee{\xx}{\norm{\dl}_1}$ grows
linearly with $\sqrt d$ which yields higher adversarial vulnerability.
However, increasing the regularization decreases the slope of
this dimension dependence until, eventually, the dependence breaks.

\textbf{Accuracies are dimension independent.}
Figure~\ffc plots accuracy versus regularization strength, with errorbars
summarizing the 20 last training epochs.\footnote{%
Fig.\ffc\&\ffc are similar to Figures~\fc\&\ff, but with one curve per
input-dimension instead of one per regularization method. See
Appendix~\ref{sec:additional_figures} for full equivalent of
Figure~\ref{fig:results}.}
The four curves correspond to the four different input dimensions. They overlap,
which confirms that contrary to vulnerability, the accuracies are dimension
independent; and that the $\lnorm_\infty$-attack thresholds are essentially
dimension independent.

\textbf{PGD effectively recovers original input dimension.}
Figure~\ffd plots the accuracy-vulnerability trade-offs achieved by the
previous nets over their 20 last training epochs, with a smoothing spline
fitted for each input dimension (scipy's UnivariateSpline with s=200). Higher
dimensions have a longer plateau to the right, because without regularization,
vulnerability increases with input dimension. The curves overlap when moving to
the left, meaning that the accuracy-vulnerability trade-offs achieved by PGD
are essentially \emph{independent of the actual input dimension.}

\textbf{PGD training outperforms down-sampling.}
On artificially upsampled CIFAR-10 images, PGD regularization acts as if it
first reduced the images back to their original size before classifying them.
Can PGD outperform this strategy when the original image is really high
resolution? To test this, we create a 12-class `Mini-ImageNet' dataset with
approximately 80,000 images of size 3x256x256 by merging similar ImageNet
classes and center-cropping/resizing as needed. We then do the same experiment
as with up-sampled CIFAR-10, but using down-sampling instead of up-sampling
(Appendix~\ref{sec:imgnet}, Fig.~\ref{fig:imgnet_summary}). While the
dependence of vulnerability to the dimension stays essentially unchanged, PGD
training now achieves much better accuracy-vulnerability trade-offs with the
original high-dimensional images than with their down-sampled versions.

\textbf{Insights from figures in Appendix~\ref{sec:additional_figures}.}
Appendix~\ref{sec:additional_figures} reproduces many additional figures on
this section's experiments. They yield additional insights which we summarize
here.

    \emph{Non-equivalence of loss- and accuracy-damage.}
    Figure~\ref{fig:bcifar_te_training_curves}a\&c show that the test-error
    continues to decrease all over training, while the cross-entropy increases
    on the test set from epoch $\approx 40$ and on.  This aligns with the
    observations and explanations of \citet{soudry18implicit}. But it also
    shows that one must be careful when substituting their differentials, loss-
    and accuracy-damage. (See also Fig.\ref{fig:bcifar_early_vs_last}b.)

    \emph{Early stopping dampens vulnerability.}
    Fig.\ref{fig:bcifar_te_training_curves} shows that adversarial damage and
    vulnerability closely follow the evolution of cross-entropy. Since
    cross-entropy overfits, early stopping effectively acts as a defense.  See
    Fig.\ref{fig:bcifar_early_vs_last2}.

    \emph{Gradient norms do not generalize well.}
    Figure~\ref{fig:tr_te_norms} reveals a strong discrepancy between the
    average gradient norms on the test and the training data. This discrepancy
    increases over training (gradient norms decrease on the training data but
    increase on the test set), and with the input dimension, as $\sqrt d$. This
    dimension dependence might suggest that, outside the training points, the
    networks tend to recover initial gradient properties.
    Our observations confirm
    \citeauthor{schmidt18adversarially}'s \citeyearpar{schmidt18adversarially}
    recent finding that PGD-regularization has a hard time generalizing to the
    test-set. They claim that better generalization requires more data.
    Alternatively, we could try to rethink our network modules to adapt it to
    the data, e.g.\ by decreasing their degrees of `gradient-freedom'.
    Evaluating the gradient-sizes at initialization may help doing so.

\section{Conclusion}
For differentiable classifiers and losses, we showed that adversarial
vulnerability increases with the gradients $\dl$ of the loss. All
approximations made are fully specified, and validated by the near-perfect
functional relationship between gradient norms and vulnerability (Fig.\fd). We
evaluated the size of $\norm{\dl}_q$ and showed that, at initialization, many
current feedforward nets (convolutional or fully connected) are increasingly
vulnerable to $\lnorm_p$-attacks with growing input dimension (image size),
independently of their architecture. Our experiments confirm this dimension
dependence after usual training, but PGD-regularization dampens it and can
effectively counter-balance the effect of artificial input dimension
augmentation. Nevertheless, regularizing beyond a certain point yields a rapid
decrease in accuracy, even though at that point many adversarial examples are
still visually undetectable for humans. Moreover, the gradient norms remain
much higher on test than on training examples. This suggests that even with PGD
robustification, there are still significant statistical differences between
the network's behavior on the training and test sets.  Given the generality of
our results in terms of architectures, this can perhaps be alleviated only via
tailored architectural constraints on the gradients of the network.  Based on
these theoretical insights, we hypothesize that tweaks on the architecture may
not be sufficient, and coping with the phenomenon of adversarial examples will
require genuinely new ideas.

\subsubsection*{Acknowledgements}
We thank Mart\'in Arjovsky, Ilya Tolstikhin and Diego Fioravanti for helpful
discussions.

\bibliography{advers}
\bibliographystyle{icml2019}

\newpage
\appendix

\normalsize
\section{Proofs\label{proofs}}

\subsection{Proof of Proposition~\ref{lossEqui}}

\begin{proof}
Let $\epsilon \, \dd$ be an adversarial perturbation with $\norm{\dd} = 1$ 
that locally maximizes the loss
increase at point $\xx$, meaning that $\dd = \mathrm{arg\,max}_{\norm{\dd'} \leq
1} \dl \cdot \dd'$. Then, by definition of the dual norm of $\dl$ we
have: $\dl \cdot (\epsilon \dd) = \epsilon \, \dn{\dl}$. Thus
\begin{align*}
    \tilde{\loss}_{\epsilon, \norm{\cdot}}(\xx,c)
        &= \frac 1 2 (\loss(\xx,c) + \loss(\xx+\epsilon \,  \dd, c)) \\
        &= \frac 1 2 (2 \loss(\xx,c) + \epsilon \left | \dl \cdot \dd \right | + o(\norm{\dd})) \\
        &= \loss(\xx,c) + \frac \epsilon 2 \, \dn{\dl} + o(\epsilon) \\
        &= \loss_{\epsilon, \dn{\cdot}}(\xx,c) + o(\epsilon) \, . \qedhere
\end{align*}
\end{proof}

\subsection{Proof of Theorem~\ref{fullcon}\label{proofFullCon}}

\begin{proof}
    Let $x$ designate a generic coordinate of $\xx$. To evaluate the size of $\norm{\dl}_q$,
    we will evaluate the size of the coordinates $\dli$ of $\dl$ by decomposing
    them into
    \begin{equation*}
        \dli = \sum_{k=1}^K \frac{\partial \loss}{\partial f_k} 
                            \frac{\partial f_k}{\partial x}
            =: \sum_{k=1}^K \dlk \, \dfik,
    \end{equation*}
    where $f_k(\xx)$ denotes the logit-probability of $\xx$ belonging to class $k$.
    We now investigate
    the statistical properties of the logit gradients $\dfk$, and then see how
    they shape $\dli$.
    
    \paragraph{Step 1: Statistical properties of $\dfk$.}
    Let $\P(x,k)$ be the set of paths $\pp$ from input neuron $x$ to
    output-logit $k$.
    Let $p-1$ and  $p$ be two successive neurons on path $\pp$, and $\tilde \pp$
    be the same path $\pp$
    but without its input neuron. Let $w_p$ designate the weight from $p-1$ to
    $p$ and $\omega_{\pp}$ be the \emph{path-product}
    $\omega_\pp := \prod_{p \in \tilde \pp} w_p$.
    Finally, let $\sigma_p$ (resp.\ $\sigma_\pp$) be equal to 1 if the
    ReLU of node $p$ (resp.\ if path $\pp$) is active for input $\xx$, and 0
    otherwise.

    As previously noticed by \citet{balduzzi17neural} using the chain rule, we
    see that $\dfik$ is the sum of all $\omega_\pp$ whose path is active, i.e.\
    $
        \dfik(\xx) = \sum_{\pp \in \P(x,k)} \omega_\pp \sigma_\pp .
    $
    Consequently:
    \begin{multline}\label{varEqFull}
        \ee{W, \sigma}{\dfik(\xx)^2}
            = \sum_{\pp \in \P(x,k)} 
                \prod_{p \in \tilde \pp} \ee{W}{w_p^2} \ee{\sigma}{\sigma_p^2} \\
            = |\P(x,k)| \prod_{p \in \tilde \pp} \frac{2}{d_{p-1}} \frac 1 2
            =  \prod_{p \in \tilde \pp} d_p \cdot
               \prod_{p \in \tilde \pp} \frac{1}{d_{p-1}}
            = \frac{1}{d} \, .
    \end{multline}
    The first equality uses \ref{h1} to decouple the expectations over weights
    and ReLUs, and then applies Lemma~\ref{decor} of Appendix~\ref{proofTheo},
    which uses \ref{h2}-\ref{h4} to kill all cross-terms and take the
    expectation over weights inside the product. 
    The second equality uses~\ref{h2} and the fact that the resulting product is
    the same for all active paths. The third equality counts the
    number of paths from $x$ to $k$ and we conclude by noting that all terms cancel out, except
    $d_{p-1}$ from the input layer which is $d$. Equation~\ref{varEqFull} shows that $|\dfik|
    \propto 1/\sqrt d$.

    \paragraph{Step 2: Statistical properties of $\dlk$ and $\dli$.}
    Defining $q_k(\xx) := \frac{e^{f_k(\xx)}}{\sum_{h=1}^K e^{f_h(\xx)}}$ (the
    probability of image $x$ belonging to class $k$ according to the 
    network), we have, by definition of the cross-entropy loss, $\loss(\xx,c)
    := -\log q_c(\xx)$, where $c$ is the label of the target class. Thus:
    \begin{equation*}
        \dlk(\xx) =
            \left \{
            \begin{array}{ll}
                     - q_k(\xx) \quad &\text{if } k \neq c \\
                     1-q_c(\xx) \quad &\text{otherwise},
            \end{array}
            \right .
            \quad \text{and}
    \end{equation*}
    \begin{equation}\label{XentrGradientUgly}
        \dli(\xx) = (1-q_c) \, \dfic(\xx) + \sum_{k \neq c} q_k \, (-\dfik(\xx)).
    \end{equation}
    Using again Lemma~\ref{decor}, we see that  the $\dfik(\xx)$ are $K$ centered
    and uncorrelated variables. So $\dli(\xx)$ is approximately the sum of $K$
    uncorrelated variables with zero-mean, and its total variance is given by
    $\big((1-q_c)^2 + \sum_{k \neq c} q_k^2 \big) / d$.
    Hence the magnitude of $\dli(\xx)$ is $1/\sqrt d$ for all $\xx$, so the
    $\lnorm_q$-norm
    of the full input gradient is $d^{1/q-1/2}$. 
    \eqref{res_dependance} concludes.
\end{proof}

\begin{remark}\label{remOnTraining}
    Equation~\ref{XentrGradientUgly} can be rewritten as
    \begin{equation}\label{XentrGradient}
        \dli(\xx) = \sum_{k=1}^K q_k(\xx) \, \big(\dfic(\xx) - \dfik(\xx)\big) \ . 
    \end{equation}
    As the term $k=c$ disappears, the norm of the gradients $\dli(\xx)$ appears
    to be controlled by the total error probability. 
    This suggests that, even without regularization, trying to decrease the
    ordinary classification error is still a valid strategy against adversarial
    examples. It reflects the fact that when increasing the classification
    margin, larger gradients of the classifier's logits are needed to push
    images from one side of the classification boundary to the other. This is
    confirmed by Theorem~2.1 of \citet{hein17formal}. See also
    \eqref{heinEq} in Appendix~\ref{crosslip}.
\end{remark}

\subsection{Proof of Theorem~\ref{generaltheo}\label{proofTheo}}

The proof of Theorem~\ref{generaltheo} is very similar to the one of
Theorem~\ref{fullcon}, but we will need to first generalize the equalities
appearing in \eqref{varEqFull}. To do so, we identify the computational
graph of a neural network to an abstract Directed Acyclic Graph (DAG) which we
use to prove the needed algebraic equalities. We then concentrate on the
statistical weight-interactions implied by assumption~\H, and finally throw
these results together to prove the theorem. In all the proof, $o$ will
designate one of the output-logits $f_k(\xx)$.
\begin{lemma}\label{dag}
    Let $\xx$ be the vector of inputs to a given DAG, $o$ be any leaf-node of the DAG,
    $x$ a generic coordinate of $\xx$. 
    Let $\pp$ be a path from the set of paths $\P(x,o)$ from $x$ to $o$,
    $\tilde \pp$ the same path without node $x$, $p$ a
    generic node in $\tilde \pp$, and $d_p$ be its input-degree.
    Then:
    \begin{equation}\label{dageq}
        \sum_{x \in \xx} \sum_{\tilde \pp \in \P(x,o)} \prod_{p \in \tilde \pp} \frac{1}{d_p} = 1
    \end{equation}
\end{lemma}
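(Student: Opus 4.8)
The identity has a clean probabilistic reading that guides the whole proof. The weight $\prod_{p \in \tilde\pp} 1/d_p$ is exactly the probability that a backward random walk started at $o$ — which at each node moves to a uniformly chosen parent — traverses the reversed path $\tilde\pp$. Since the DAG is finite and acyclic, this walk terminates almost surely at some source (input) node, so summing the probabilities of all terminating paths over all input coordinates $x$ must give $1$. The plan is to turn this intuition into a rigorous induction over the DAG.

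Concretely, for an arbitrary node $o$ I would define
$$S(o) := \sum_{x \in \xx} \sum_{\tilde\pp \in \P(x,o)} \prod_{p \in \tilde\pp} \frac{1}{d_p},$$
and prove $S(o) = 1$ by induction following a topological order of the DAG. For the base case, the nodes of in-degree $0$ are precisely the input coordinates; if $o = x$ is such a node, the only element of $\P(x,o)$ is the trivial path, $\tilde\pp$ is empty, the empty product equals $1$, and hence $S(o) = 1$.

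For the inductive step, let $o$ have in-degree $d_o$ with parents $p^{(1)}, \dots, p^{(d_o)}$. Every path from an input $x$ to $o$ is obtained by extending a path $\pp'$ from $x$ to some parent $p^{(i)}$ by the final edge $p^{(i)} \to o$. Writing $\tilde\pp = \tilde\pp' \cup \{o\}$, the node $o$ contributes the single extra factor $1/d_o$, so that $\prod_{p \in \tilde\pp} 1/d_p = \tfrac{1}{d_o} \prod_{p \in \tilde\pp'} 1/d_p$. Summing over $x$, over the parent index $i$, and over the paths $\pp'$ reaching $p^{(i)}$, and factoring out $1/d_o$, gives the telescoping recursion
$$S(o) = \frac{1}{d_o} \sum_{i=1}^{d_o} S\bigl(p^{(i)}\bigr) = \frac{1}{d_o} \cdot d_o = 1,$$
where the middle equality uses the induction hypothesis $S(p^{(i)}) = 1$ for each parent (each precedes $o$ in the topological order). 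Applying this with $o$ the chosen leaf yields the claim.

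The only delicate point is the bookkeeping of $\tilde\pp$ versus $\tilde\pp'$: one must check that removing the shared input node $x$ leaves exactly the extra node $o$ as the difference between the two truncated paths, so that precisely one factor $1/d_o$ is peeled off at each step and nothing is double-counted. Getting this decomposition right — together with the empty-product base case — is the whole substance of the argument; the remainder is the sum over parents, which collapses by the induction hypothesis.
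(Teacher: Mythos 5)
Your proof is correct. It shares the paper's guiding idea --- the weight $\prod_{p \in \tilde\pp} 1/d_p$ is the probability that a backward uniform random walk from $o$ traverses $\pp$ --- but your formal argument is different: the paper simply observes that, the DAG being finite, this walk reaches an input node with probability $1$, and concludes that the path probabilities sum to $1$; it never unpacks why termination at a source is almost sure or how the total probability decomposes over paths. You instead discharge the probabilistic framing entirely and prove the identity by induction along a topological order, with the empty-product base case at the sources and the one-step decomposition
\begin{equation*}
    S(o) = \frac{1}{d_o} \sum_{i=1}^{d_o} S\bigl(p^{(i)}\bigr)
\end{equation*}
peeling off exactly one factor $1/d_o$ per edge. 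What each route buys: the paper's argument is shorter and makes the `conservation of probability mass' interpretation vivid, but it silently relies on the standard facts that a walk on a finite DAG terminates at a source almost surely and that the terminal-distribution mass splits over paths as the product of transition probabilities; your induction is fully self-contained, elementary, and makes explicit the one point the paper glosses over (the $\tilde\pp$ versus $\tilde\pp'$ bookkeeping, i.e.\ that both truncations drop the same input node $x$ so only the factor $1/d_o$ is new). A small bonus of your version: it proves $S(o)=1$ for \emph{every} node $o$, not only leaves, and it uses the assumption --- implicit in the paper too --- that the in-degree-$0$ nodes are exactly the input coordinates; it would be worth stating that assumption when invoking the base case.
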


\begin{proof}
    We will reason on a random walk starting at $o$ and going up the
    DAG by choosing any incoming node with equal probability.
    The DAG being finite, this walk will end up at an input-node $x$ with
    probability $1$.
    Each path $\pp$ is taken with probability $\prod_{p \in \tilde \pp} \frac{1}{d_p}$. 
    And the probability to end up at an input-node is the sum of all these
    probabilities, i.e. $\sum_{x \in \xx} \sum_{\pp \in \P(x,o)} \prod_{p \in
    \pp} d_p^{-1}$, which concludes.
\end{proof}

The sum over all inputs $x$ in \eqref{dageq} being 1,
on average it is $1/d$ for each $x$, where $d$ is the total number of inputs
(i.e.\ the length of $\xx$).
It becomes an equality under assumption~\S:

\begin{lemma}\label{symdag}
    Under the symmetry assumption~\S, and with the previous notations, for any
    input $x \in \xx$:
    \begin{equation}
        \sum_{\pp \in \P(x,o)} \prod_{p \in \tilde \pp} \frac{1}{d_p} = \frac{1}{d} \, .
    \end{equation}
\end{lemma}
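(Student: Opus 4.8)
The plan is to combine the global identity already proved in Lemma~\ref{dag} with the symmetry assumption~\S, arguing that the latter forces each input node to contribute an equal share of the total mass $1$. In other words, Lemma~\ref{dag} gives the sum over \emph{all} inputs for free, and symmetry is exactly what lets me distribute that sum uniformly.

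First I would observe that the per-input quantity $\sum_{\pp \in \P(x,o)} \prod_{p \in \tilde{\pp}} \frac{1}{d_p}$ depends \emph{only} on the multiset of path-degrees $\{d_\pp\}_{\pp \in \P(x,o)}$. For a fixed path $\pp$, the factor $\prod_{p \in \tilde{\pp}} \frac{1}{d_p}$ is determined entirely by the multiset of in-degrees encountered along $\pp$ --- which is, by definition, the path-degree $d_\pp$ (the input node $x$ itself carries no in-degree and is dropped in $\tilde{\pp}$). Summing over all paths then expresses the whole quantity as a function of the multiset $\{d_\pp\}_{\pp \in \P(x,o)}$ alone, insensitive both to the ordering of nodes within a path and to which particular input the paths emanate from.

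Next I would invoke assumption~\S, which asserts precisely that this multiset is the same for every input node $x$. Hence the per-input sum takes a common value, say $S$, independent of $x \in \xx$. Lemma~\ref{dag} states that the sum of these per-input quantities over all $x$ equals $1$; since there are $d$ inputs and each contributes the same value $S$, we obtain $dS = 1$, i.e.\ $S = 1/d$, which is exactly the claimed equality.

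The only point requiring care is the reduction in the second step: one must verify that the path-product reads off solely the multiset $d_\pp$ and nothing else. This is immediate from the definitions, but it is the crux of the argument --- it is what allows the symmetry hypothesis to upgrade the aggregate statement of Lemma~\ref{dag} into the sharp per-input equality. Beyond this observation the proof is a one-line averaging, with essentially no computation involved.
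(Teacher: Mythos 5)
Your proposal is correct and follows essentially the same route as the paper's own proof: observe that the per-input sum $\sum_{\pp \in \P(x,o)} \prod_{p \in \tilde \pp} d_p^{-1}$ is a function of the multiset of path-degrees alone, invoke \S\ to conclude it is the same value $S$ for every input, and then use Lemma~\ref{dag} to get $dS = 1$. Your remark singling out the reduction to the multiset as the crux matches the paper's treatment exactly, so there is nothing to add.
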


\begin{proof}
    Let us denote $\Dset := \{d_\pp\}_{x \in \P(x,o)}$.
    Each path $\pp$ in $\P(x,o)$ corresponds to exactly one element $\dset$ in
    $\Dset$ and vice-versa. And the elements $d_p$ of $\dset$ completely
    determine the product $\prod_{p \in \tilde \pp} d_p^{-1}$. By using
    \eqref{dageq} and the fact that, by \S, the multiset $\Dset$ is
    independent of $x$, we hence conclude
    \begin{align*}
        \sum_{x \in \xx} \sum_{\pp \in \P(x,o)} \prod_{p \in \tilde \pp} \frac{1}{d_p} 
            &= \sum_{x \in \xx} \sum_{\dset \in \Dset} \prod_{d_p \in \dset} \frac{1}{d_p} \\
            &= d \sum_{\dset \in \Dset} \prod_{d_p \in \dset} \frac{1}{d_p}
            = 1 \, . \tag*{\qedhere}
    \end{align*}
\end{proof}

Now, let us relate these considerations on graphs to gradients and use
assumptions~\H. We remind that path-product $\omega_\pp$ is the product
$\prod_{p \in \tilde \pp} w_p$.

\begin{lemma}\label{decor}
    Under assumptions \H, the path-products $\omega_{\pp}, \omega_{\pp'}$
    of two distinct paths
    $\pp$ and $\pp'$ starting from a same input node $x$, satisfy:
    \begin{align*}
        \ee{W}{\omega_\pp \, \omega_{\pp'}} = 0 \quad \text{and} \quad 
        \ee{W}{\omega_\pp^2} = \prod_{p \in \tilde \pp} \ee{W}{w_p^2} \, . 
    \end{align*}
    Furthermore, if there is at least one non-average-pooling weight on path
    $\pp$, then $\ee{W}{\omega_{\pp}} = 0$.
\end{lemma}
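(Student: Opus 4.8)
The plan is to reduce all three identities to a single structural fact, guaranteed by \ref{h5}: the edges of any one path lie in pairwise distinct layers, so that \ref{h3} makes the weights $\{w_p\}_{p\in\tilde\pp}$ of a single path mutually independent. Granting this, the second and third claims are immediate, so I would dispatch them first. Since $\omega_\pp^2=\prod_{p\in\tilde\pp}w_p^2$ is a product of independent factors, $\ee{W}{\omega_\pp^2}=\prod_{p\in\tilde\pp}\ee{W}{w_p^2}$; and likewise $\ee{W}{\omega_\pp}=\prod_{p\in\tilde\pp}\ee{W}{w_p}$, which vanishes as soon as one factor is a non-average-pooling weight, because such a weight has mean $0$ by \ref{h2}.

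The real work is the cross term. First I would locate where the two paths separate: since $\pp$ and $\pp'$ are distinct but issue from the same input $x$, they share a (possibly empty) maximal prefix ending at a common node $a$, and then leave $a$ along two different edges carrying weights $u$ (on $\pp$) and $u'$ (on $\pp'$). These are two distinct weights emanating from the same node $a$, which is exactly the configuration controlled by \ref{h4}. The goal is then to exhibit $\ee{W}{u\,u'}$ as an isolated factor of $\ee{W}{\omega_\pp\,\omega_{\pp'}}$ and invoke $\ee{W}{u\,u'}=0$.

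To isolate that factor I would group all weights occurring in the product $\omega_\pp\,\omega_{\pp'}$ by the layer they belong to and factorize the expectation across layers using \ref{h3}. The layer $\ell$ of the divergence edges then contributes exactly $\ee{W}{u\,u'}$: by \ref{h5} neither path can revisit $\ell$, so the shared prefix (which stops at $a$, strictly before $\ell$) and the two suffixes (which run strictly after $\ell$) contribute no further weight to that layer. Hence the layer-$\ell$ factor is $u\,u'$, and \ref{h4} forces it, and therefore the whole product, to zero.

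I expect the main obstacle to be making this layer factorization airtight rather than the algebra. Two points need care: one must check that $a$ sits strictly earlier than the divergence layer and that, by \ref{h5}, no prefix or suffix edge sneaks back into that layer, so that $u$ and $u'$ really are the only weights there; and one must confirm that the two divergent edges land in a common layer, so that the relevant factor is the correlated pair $u\,u'$ governed by \ref{h4} rather than a lone weight. In the degenerate case where the two edges target different layers, the earlier of the two then carries a single weight $u$, whose expectation $\ee{W}{u}=0$ by \ref{h2} (for non-average-pooling weights) again kills the product; I would note this case in passing but treat the layered configuration as the main one.
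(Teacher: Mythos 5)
Your proposal is correct and takes essentially the same route as the paper's proof: the paper likewise gets the second and third identities by factorizing the expectation across layers (\ref{h3}, with \ref{h5} ensuring a path's weights lie in distinct layers, and \ref{h2} supplying the zero mean), and kills the cross term by locating the first node after which $\pp$ and $\pp'$ part and applying \ref{h4} to the two divergence weights $w_p, w_{p'}$, which emanate from that common node. Your layer-by-layer grouping is just a more explicit rendering of the paper's peeling-off step $\ee{W}{\omega_\pp \, \omega_{\pp'}} = \ee{W}{\omega_{\pp \backslash p} \, \omega_{\pp' \backslash p'}} \ee{W}{w_p \, w_{p'}} = 0$, and your degenerate-case aside (divergence edges landing in different layers) addresses a configuration the paper's proof implicitly excludes.
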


\begin{proof}
    Hypothesis \ref{h3} yields 
    \begin{equation*}
        \ee{W}{\omega_\pp^2} = \ee{W}{\prod_{p \in \tilde \pp} w_p^2}
            = \prod_{p \in \tilde \pp} \ee{W}{w_p^2} \, .
    \end{equation*}
    Now, take two different paths $\pp$ and $\pp'$ that start
    at a same node $x$. Starting from $x$, consider the first node after which
    $\pp$ and $\pp'$ part and call $p$ and $p'$ the next nodes on $\pp$ and $\pp'$
    respectively. Then the weights $w_p$ and $w_{p'}$ are two weights of a same
    node. Applying \ref{h3} and \ref{h4} hence gives
    \begin{equation*}
        \ee{W}{\omega_\pp \, \omega_{\pp'}} 
            = \ee{W}{\omega_{\pp \backslash p} \, \omega_{\pp' \backslash p'}}
               \ee{W}{w_p \, w_{p'}}
            = 0 \, .
    \end{equation*}
    Finally, if $\pp$ has at least one non-average-pooling node $p$, then
    successively applying \ref{h3} and \ref{h2} yields:
    $\ee{W}{\omega_{\pp}} = \ee{W}{\omega_{\pp \backslash p}}
    \ee{W}{w_p} = 0$.
\end{proof}

We now have all elements to prove Theorem~\ref{generaltheo}.

\begin{proof}(\textbf{of Theorem~\ref{generaltheo}})
    For a given neuron $p$ in $\tilde \pp$, let
    $p-1$ designate the previous node in $\pp$ of $p$. Let 
    $\sigma_p$ (resp.\ $\sigma_\pp$) be a variable equal to $0$ if neuron $p$ gets killed by its
    ReLU (resp.\ path $\pp$ is inactive), and 1 otherwise.
    Then:
    \begin{equation*}
        \partial_x o \
          = \sum_{\pp \in \P(x,o)} \prod_{p \in \tilde \pp} \partial_{p-1} \, p \
          = \sum_{\pp \in \P(x,o)} \omega_\pp \, \sigma_\pp
    \end{equation*}
    Consequently:
    \begin{align}\label{varEq}
        \ee{W, \sigma}{(\partial_x o)^2}
          &= \sum_{\pp, \pp' \in \P(x,o)} \ee{W}{\omega_\pp \, \omega_{\pp'}} 
                \ee{\sigma}{\sigma_\pp \sigma_{\pp'}} \nonumber \\
          &= \sum_{\pp \in \P(x,o)} 
                \prod_{p \in \tilde \pp}
                \ee{W}{\omega_p^2} 
                \ee{\sigma}{\sigma_p^2} \\
          &= \sum_{\pp \in \P(x,o)} \prod_{p \in \tilde \pp} \frac{2}{d_p} \, \frac{1}{2}
          = \frac{1}{d} \, , \nonumber
    \end{align}
    where the first line uses the independence between the ReLU killings and
    the weights (\ref{h1}), the second uses Lemma~\ref{decor} and the last uses
    Lemma~\ref{symdag}.
    The gradient $\bm{\partial_\xx} o$ thus has coordinates whose squared
    expectations scale like $1/d$. Thus each coordinate scales like $1/\sqrt d$
    and $\norm{\bm{\partial_\xx} o}_q$ like $d^{1/2 - 1/q}$. Conclude on
    $\norm{\dl}_q$ and $\epsilon_p \norm{\dl}_q$ by using Step~2 of the proof
    of Theorem~\ref{fullcon}.

    Finally, note that, even without the symmetry assumption \S, using
    Lemma~\ref{dag} shows that
    \begin{align*}
        \ee{W}{\norm{\bm{\partial_\xx} o}_2^2}
          &= \sum_{x \in \xx} \ee{W}{(\partial_x o)^2} \\
          &= \sum_{x \in \xx} \sum_{\pp \in \P(x,o)} \prod_{p \in \tilde \pp} 
                \frac{2}{d_p} \, \frac{1}{2}
          = 1 \, .
    \end{align*}
    Thus, with or without \S, $\norm{\bm{\partial_\xx} o}_2$ is independent of the
    input-dimension $d$.
\end{proof}

\subsection{Proof of Theorem~\ref{avgpooltheo}\label{avgpoolproof}}

To prove Theorem~\ref{avgpooltheo}, we will actually prove the following more
general theorem, which generalizes Theorem~\ref{generaltheo}.
Theorem~\ref{avgpooltheo} is a straightforward corollary of it.

\begin{theorem}
    Consider any feedforward network with linear connections and ReLU activation functions
    that outputs logits $f_k(\xx)$ and satisfies assumptions \H.
    Suppose that there is a fixed multiset of integers $\{a_1, \ldots, a_n\}$ such
    that each path from input to output traverses exactly $n$ average
    pooling nodes with degrees $\{a_1, \ldots, a_n\}$. Then:
    \begin{equation}\label{logitl2norm}
        \norm{\dfik}_2 \propto \frac{1}{\prod_{i=1}^{n} \sqrt{a_i}} \, .
    \end{equation}
    Furthermore, if the net satisfies the symmetry assumption \S, then:
    $
        |\dfik| \propto \frac{1}{\sqrt{d \prod_{i=1}^{n} a_i}} .
    $
\end{theorem}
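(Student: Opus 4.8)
The plan is to transcribe the proof of Theorem~\ref{generaltheo}, tracking how average-pooling nodes alter the per-node contributions in the variance computation. I would start, as there, from the path expansion $\partial_x o = \sum_{\pp \in \P(x,o)} \omega_\pp \, \sigma_\pp$ given by the chain rule, so that for the output logit $o = f_k$ we have $\norm{\bm{\partial_\xx} o}_2^2 = \sum_{x \in \xx} (\partial_x o)^2 = \sum_{x} \sum_{\pp, \pp'} \omega_\pp \omega_{\pp'} \sigma_\pp \sigma_{\pp'}$. Taking the expectation over weights and ReLU gates, I would first argue that all cross terms ($\pp \neq \pp'$) vanish, leaving only the diagonal $\ee{W,\sigma}{\omega_\pp^2 \sigma_\pp^2} = \prod_{p \in \tilde\pp} \ee{W}{w_p^2}\, \ee{\sigma}{\sigma_p^2}$, which is the natural generalization of Lemma~\ref{decor}.

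The key new ingredient is to split each path's nodes into ordinary (He-initialized) nodes and the $n$ average-pooling nodes. For an ordinary node $p$ of in-degree $d_p$, assumptions \ref{h1}--\ref{h2} give $\ee{W}{w_p^2}\ee{\sigma}{\sigma_p^2} = \frac{2}{d_p}\cdot\frac12 = \frac{1}{d_p}$, exactly the factor appearing in the random walk of Lemma~\ref{dag}. For an average-pooling node of degree $a_i$, the weight is the deterministic constant $1/a_i$, so $\ee{W}{w_p^2} = 1/a_i^2$, while $\ee{\sigma}{\sigma_p^2}$ is merely a constant (whether or not a ReLU follows, this only affects the overall proportionality constant). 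Each pooling node thus contributes $\propto 1/a_i^2$ to the product, whereas it contributes only a factor $1/a_i$ to the random-walk weight $\prod_{p\in\tilde\pp} 1/d_p$, its in-degree being $a_i$. This one-factor-of-$1/a_i$ mismatch per pooling node is precisely the damping: the He-initialization still perfectly cancels the combinatorics at ordinary nodes, and only the pooling nodes leave a residual.

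Concretely, writing $R$ for the ordinary nodes of a path, the diagonal term factors as $\bigl(\prod_{p\in R} \frac{1}{d_p}\bigr)\bigl(\prod_{i=1}^n \frac{1}{a_i^2}\bigr)$. Since by hypothesis $\{a_1,\dots,a_n\}$ is the same multiset for every path, $\prod_i 1/a_i^2$ pulls out of the sum over paths, and I would invoke Lemma~\ref{dag} in the form $\sum_x \sum_\pp \bigl(\prod_{p\in R} \frac1{d_p}\bigr)\prod_i \frac1{a_i} = 1$ to deduce $\sum_x\sum_\pp \prod_{p\in R} \frac1{d_p} = \prod_i a_i$. Combining, $\ee{W}{\norm{\bm{\partial_\xx} o}_2^2} \propto \bigl(\prod_i \frac1{a_i^2}\bigr)\prod_i a_i = \prod_i \frac1{a_i}$, which is exactly \eqref{logitl2norm}. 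For the second claim I would repeat the computation for a single coordinate $x$ and replace Lemma~\ref{dag} by Lemma~\ref{symdag}, which under \S gives $\sum_\pp \prod_{p\in\tilde\pp}\frac1{d_p} = 1/d$ and hence $\ee{W,\sigma}{(\partial_x o)^2} \propto \frac{1}{d\prod_i a_i}$, i.e.\ $|\dfik| \propto 1/\sqrt{d\prod_i a_i}$. The passage from these logit-gradient estimates to $\dl$ then follows verbatim from Step~2 of the proof of Theorem~\ref{fullcon}.

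The main obstacle is the cross-term vanishing. The proof of Lemma~\ref{decor} kills $\ee{W}{\omega_\pp \omega_{\pp'}}$ by exhibiting, at the first node where $\pp$ and $\pp'$ part, two sibling weights with $\e{w\,w'} = 0$ via assumption~\ref{h4}. This argument breaks exactly when both diverging edges are pooling edges, since their weights are the constants $1/a$ and $1/a'$ with nonzero product. I would therefore need the structural fact that no node feeds two distinct average-pooling nodes --- which holds for non-overlapping pooling windows, the standard case --- so that at least one diverging edge is always He-initialized and the cross term still vanishes. Checking that this covers the architectures of interest, together with the $1/a_i^2$-versus-$1/a_i$ bookkeeping, is where the care lies; the remainder is a direct transcription of the proof of Theorem~\ref{generaltheo}.
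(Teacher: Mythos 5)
Your proof is correct and follows essentially the same route as the paper's: the same path expansion, the same diagonal reduction via Lemma~\ref{decor}, the same splitting of each path into He-initialized and average-pooling nodes with the pooling contribution $\prod_i a_i^{-2}$ pulled out as a path-independent constant, and the same use of Lemmas~\ref{dag} and~\ref{symdag} to absorb the remaining combinatorics. The one subtlety you flag --- cross terms surviving when two paths diverge along two deterministic pooling edges --- is resolved in the paper by the remark that assumption~\ref{h4}, applied to the deterministic weights as well, already forces average-pooling nodes to have disjoint input nodes, which is precisely the non-overlapping-window condition you propose to assume.
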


Two remarks. First, in all this proof, ``weight'' encompasses both the
standard random weights, and the constant (deterministic) weights equal to
$1/(\text{in-degree})$ of the average-poolings. Second,
assumption \ref{h4} implies that the average-pooling nodes have disjoint input
nodes: otherwise, there would be two non-zero deterministic weights $w, w'$
from a same neuron that would hence satisfy: $\ee{W}{w \, w'} \neq 0$.

\begin{proof}
    As previously, let $o$ designate any fixed output-logit $f_k(\xx)$.
    For any path $\pp$, let $\aa$ be the set of average-pooling nodes of $\pp$
    and let $\qq$ be the set of remaining nodes. Each path-product $\omega_\pp$
    satisfies: $\omega_{\pp} = \omega_{\qq} \omega_{\aa}$, where $\omega_{\aa}$
    is a same fixed constant. For two distinct paths $\pp, \pp'$, 
    Lemma~\ref{decor} therefore yields:
    $\ee{W}{\omega_{\pp}^2} = \omega_{\aa}^2 \ee{W}{\omega_{\qq}^2}$ and 
    $\ee{W}{\omega_{\pp} \omega_{\pp'}} = 0$.
    Combining this with Lemma~\ref{symdag} and under assumption~\S, we get
    similarly to \eqref{varEq}:
    \begin{align}\label{varEqPool}
        \ee{W, \sigma}{(\partial_x o)^2}
          &= \! \! \! \! \! \sum_{\pp, \pp' \in \P(x,o)} \! \! \! \! \!
                \omega_{\aa} \omega_{\aa'}
                \ee{W}{\omega_\qq \, \omega_{\qq'}} 
                \ee{\sigma}{\sigma_\qq \sigma_{\qq'}} \nonumber \\
          &= \sum_{\pp \in \P(x,o)} 
                \prod_{i=1}^n \frac{1}{a_i^2} \, 
                \prod_{q \in \tilde \qq}
                \ee{W}{\omega_q^2} 
                \ee{\sigma}{\sigma_q^2} \nonumber \\
          &= \underbrace{
            \prod_{i=1}^n \frac{1}{a_i}
            }_{\substack{\text{same value} \\
                \text{for all } \pp}}
            \smash{\underbrace{
            \sum_{\pp \in \P(x,o)} 
            \underbrace{
            \prod_{i=1}^n \frac{1}{a_i}
            \prod_{q \in \tilde \qq} \frac{2}{d_q} \, \frac{1}{2}
            }_{\prod_{p \in \tilde \pp} \frac{1}{d_p}}
            }_{= \frac 1 d \ \ \text{(Lemma~\ref{symdag})}}} \\
          &= \frac{1}{d} \prod_{i=1}^n \frac{1}{a_i} \, . \nonumber
    \end{align}
    Therefore, $|\partial_x o| = |\dfik| \propto 1/\sqrt{d \prod_{i=1}^n a_i}$.
    Again, note that, even without assumption \S, using
    \eqref{varEqPool} and Lemma~\ref{dag} shows that
    \begin{align*}
        \ee{W}{\norm{\bm{\partial_\xx} o}_2^2}
          &= \sum_{x \in \xx} \ee{W,\sigma}{(\partial_x o)^2} \\
          &\stackrel{\eqref{varEqPool}}{=} \sum_{x \in \xx} \prod_{i=1}^n \frac{1}{a_i}
                \sum_{\pp \in \P(x,o)} 
                \prod_{i=1}^n \frac{1}{a_i}
                \prod_{p \in \tilde \pp} 
                \frac{2}{d_p} \, \frac{1}{2} \\
          &= \prod_{i=1}^n \frac{1}{a_i} \, 
                \underbrace{
                \sum_{x \in \xx} \sum_{\pp \in \P(x,o)}
                \prod_{p \in \tilde \pp} \frac{1}{d_p}
                }_{= 1 \text{ (Lemma~\ref{dag})}}
          = \prod_{i=1}^n \frac{1}{a_i} \, ,
    \end{align*}
    which proves \eqref{logitl2norm}.
\end{proof}

\section{Effects of Strided and Average-Pooling Layers on Adversarial
Vulnerability\label{poolsec}}

It is common practice in CNNs to use average-pooling layers
or strided convolutions to progressively decrease the
number of pixels per channel. 
Corollary~\ref{convnet} shows that using strided convolutions does not protect
against adversarial examples. 
However, what if we replace strided convolutions by convolutions with
stride 1 plus an average-pooling layer?
Theorem~\ref{generaltheo} considers only
\emph{randomly} initialized weights with typical size
$1/\sqrt{\text{in-degree}}$. Average-poolings however
introduce \emph{deterministic} weights of size $1/(\text{in-degree})$.
These are smaller and may therefore
dampen the input-to-output gradients and protect against adversarial examples.
We confirm this in our next theorem, which uses a slightly modified version
($\mathcal H'$) of \H\ to allow average pooling layers. ($\mathcal H'$) is
\H, but where the He-init \ref{h2} applies to all weights \emph{except} the
(deterministic) average pooling weights, and where \ref{h1}
places a ReLU on every non-input \emph{and non-average-pooling} neuron.
\begin{theorem}[\textbf{Effect of Average-Poolings}]\label{avgpooltheo}
    Consider a succession of 
    convolution layers, dense layers and $n$ average-pooling layers, in any
    order, that satisfies ($\mathcal H'$) and outputs logits $f_k(\xx)$.
    Assume the $n$ average pooling layers have  a stride equal to their mask size and
    perform averages over $a_1$, ..., $a_n$ nodes respectively. 
    Then $\norm{\dfk}_2$ and $|\dfik|$ scale like $1/\sqrt{a_1 \cdots a_n}$ and
    $1/\sqrt{d \, a_1 \cdots a_n}$ respectively.
\end{theorem}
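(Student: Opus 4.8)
The plan is to prove Theorem~\ref{avgpooltheo} by adapting the second-moment computation from the proof of Theorem~\ref{generaltheo}, the only genuinely new ingredient being that an average-pooling node over $a_i$ inputs carries a \emph{deterministic} weight $1/a_i$ rather than a random He-initialized weight of variance $2/a_i$. Concretely, I would first isolate the mechanism by establishing an intermediate statement: for any net satisfying $\mathcal H'$ in which every input-to-output path meets a fixed multiset $\{a_1,\dots,a_n\}$ of average-pooling degrees, one has $\norm{\dfk}_2 \propto 1/\sqrt{a_1\cdots a_n}$, and $|\dfik|\propto 1/\sqrt{d\,a_1\cdots a_n}$ under the symmetry assumption~\S. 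Theorem~\ref{avgpooltheo} then follows at once: since the $n$ poolings are genuine layers (\ref{h5}) with stride equal to mask size, each path crosses pooling layer $i$ exactly once and picks up exactly one node of in-degree $a_i$, so the path-multiset of pooling degrees is the fixed set $\{a_1,\dots,a_n\}$ and the hypothesis of the intermediate statement is met.

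For the intermediate statement I would split each path-product as $\omega_\pp = \omega_\aa\,\omega_\qq$, where $\aa$ collects the pooling nodes and $\qq$ the remaining random-weight nodes. Because the pooling factor $\omega_\aa = \prod_i 1/a_i$ is a fixed constant, Lemma~\ref{decor} still governs the random part $\omega_\qq$, giving $\e{\omega_\pp\,\omega_{\pp'}}=0$ for distinct paths and $\e{\omega_\pp^2}=\omega_\aa^2\,\e{\omega_\qq^2}$. Rerunning the computation of \eqref{varEq} with these substitutions, the cross terms vanish, each pooling node contributes a factor $1/a_i^2$ out of $\omega_\aa^2$, each random node contributes $\frac{2}{d_q}\cdot\frac12=\frac{1}{d_q}$ from He-init and its ReLU, and one factor $\prod_i 1/a_i$ can be pulled outside the path-sum so that what remains inside is exactly $\prod_{p\in\tilde\pp}1/d_p$. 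Lemma~\ref{symdag} collapses this to $1/d$, yielding $\e{(\partial_x o)^2}=\frac1d\prod_i 1/a_i$ and hence $|\dfik|\propto 1/\sqrt{d\,a_1\cdots a_n}$. Summing over inputs with Lemma~\ref{dag} in place of Lemma~\ref{symdag} drops the symmetry requirement and gives the dimension-free $\norm{\dfk}_2^2\propto\prod_i 1/a_i$.

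The main obstacle is the bookkeeping of the pooling weights so that the decorrelation lemmas stay valid, and it has two facets. First, one must work with the modified hypothesis set $\mathcal H'$ rather than~\H: the He-variance \ref{h2} and the ReLU factor \ref{h1} apply only to the random, non-pooling nodes, while the pooling weights enter as the constants $1/a_i$ with $\e{w^2}=1/a_i^2$ and no associated ReLU factor. Second, one must check that \ref{h4} is not broken by these deterministic weights, and this is precisely where stride-equals-mask (non-overlapping pooling) is indispensable: it forces the pooling nodes to have pairwise disjoint input sets, so no two constant weights $w,w'$ leave the same neuron with $\e{w\,w'}\neq 0$. Once these two caveats are handled, the algebra is a direct transcription of the proof of Theorem~\ref{generaltheo}, and the resulting dampening factor $1/\sqrt{a_1\cdots a_n}$ is, heuristically, the product over layers of the order-of-magnitude ratio between the deterministic pooling scale $1/a_i$ and the He weight of typical magnitude $\sim 1/\sqrt{a_i}$ it replaces.
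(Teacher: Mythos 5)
Your proposal is correct and follows essentially the same route as the paper's proof in Appendix~\ref{avgpoolproof}: the paper likewise proves the more general intermediate statement (fixed multiset of pooling degrees per path), factors each path-product as $\omega_\pp = \omega_\aa\,\omega_\qq$ with $\omega_\aa$ deterministic, reruns the computation of \eqref{varEq} via Lemma~\ref{decor}, pulls one factor $\prod_i 1/a_i$ out of the path-sum so that Lemma~\ref{symdag} yields $\e{(\partial_x o)^2} = \frac{1}{d}\prod_i \frac{1}{a_i}$, and uses Lemma~\ref{dag} instead of Lemma~\ref{symdag} to get the symmetry-free $\lnorm_2$ statement. Your two bookkeeping caveats also mirror the paper's preliminary remarks, including the observation that non-overlapping pooling (stride equal to mask size) is exactly what reconciles the deterministic weights with \ref{h4}.
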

Proof in Appendix~\ref{avgpoolproof}.
Theorem~\ref{avgpooltheo} suggest to try and replace any strided convolution by its
non-strided counterpart, followed by an average-pooling layer. 
It also shows that if we systematically reduce the number of pixels per
channel down to 1 by using only non-strided convolutions and
average-pooling layers (i.e.\ $d = \prod_{i=1}^n a_i$), then all input-to-output gradients should become
independent of $d$, thereby making the network completely robust to adversarial
examples.
\begin{figure}[htb]
    \centering
    \vspace{1ex}
    \includegraphics[width=.49 \textwidth]{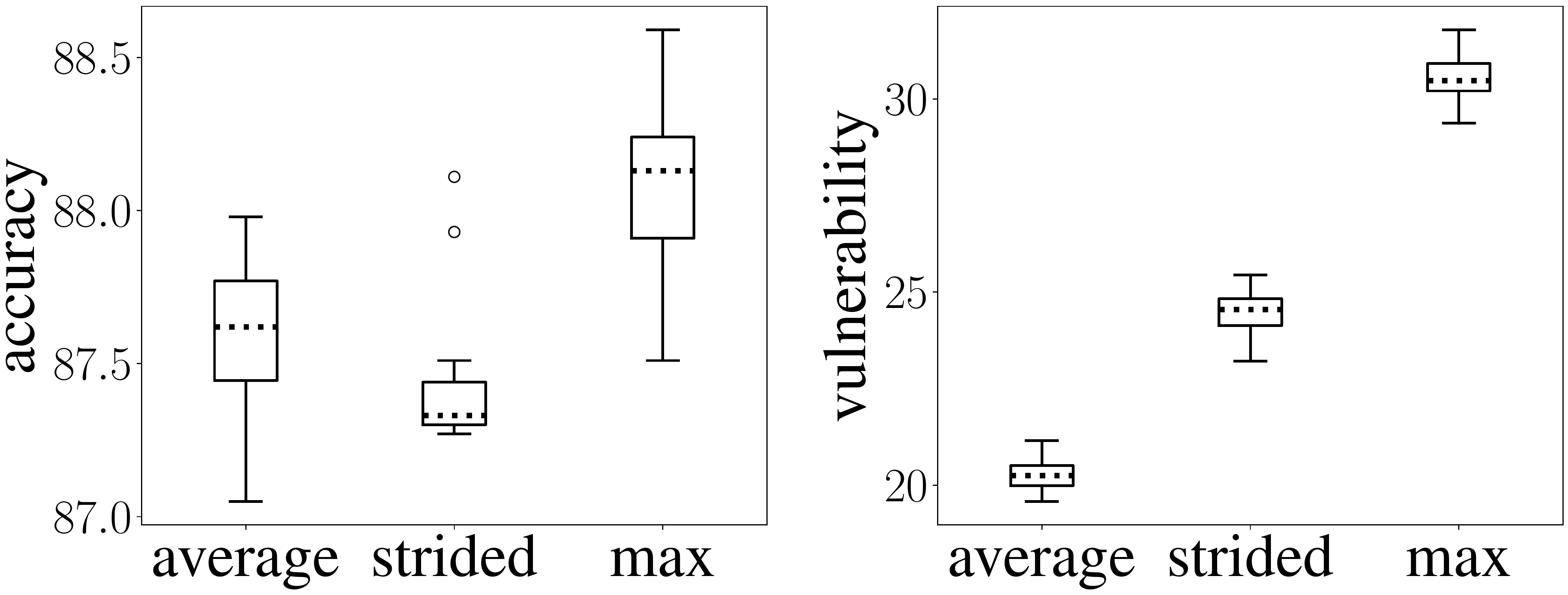}
    \vspace{2ex}
    \caption{\label{fig:pooling_effects}As predicted by
    Theorem~\ref{avgpooltheo}, average-pooling layers make networks more robust
    to adversarial examples, contrary to strided (and max-pooling) ones. But
    the vulnerability with average-poolings remains higher than anticipated.}
\end{figure}
Our following experiments (Figure~\ref{fig:pooling_effects}) show that after training, the networks
get indeed robustified to adversarial examples, but remain more
vulnerable than suggested by Theorem~\ref{avgpooltheo}.

\paragraph{Experimental setup.}
Theorem~\ref{avgpooltheo} shows that, contrary to strided layers, average-poolings
should decrease adversarial vulnerability. We tested this hypothesis on
CNNs trained on CIFAR-10, with 6 blocks of
`convolution $\rightarrow$ BatchNorm $\rightarrow $ReLU' with 64 output-channels,
followed by a final average pooling feeding one neuron per channel to the last
fully-connected linear layer. Additionally, after every second convolution, we
placed a pooling layer with stride and mask-size $(2,2)$ (thus acting on $2 \times 2$ neurons
at a time, without overlap). We tested average-pooling, strided and max-pooling layers
and trained 20 networks per architecture. Results are
shown in Figure~\ref{fig:pooling_effects}. All accuracies are very close,
but, as predicted, the networks with
average pooling layers are more robust to adversarial images than
the others.
However, they remain more vulnerable than what would follow from
Theorem~\ref{avgpooltheo}. We also noticed that, contrary to the strided
architectures, their gradients after training are an
order of magnitude higher than at initialization and than predicted. 
This suggests that assumptions \H\ get more violated when using average-poolings
instead of strided layers.
Understanding why will need further investigations.


\section{Perception Threshold\label{perception}}

To keep the average pixel-wise variation constant across dimensions $d$,
we saw in \eqref{rescale} that the threshold
$\epsilon_p$ of an $\ell_p$-attack should scale like $d^{1/p}$.
We will now see another justification for this scaling.
Contrary to the rest of this work, where we use a fixed $\epsilon_p$ for
all images $\xx$, here we will let $\epsilon_p$ depend on the
$\lnorm_2$-norm of $\xx$. If, as usual, the dataset is normalized such that
the pixels have on average variance 1, both approaches are almost equivalent.

Suppose that given an $\ell_p$-attack norm, we want to 
choose $\epsilon_p$ such that the signal-to-noise ratio (SNR)
$\norm{\xx}_2/\norm{\dd}_2$ of a perturbation $\dd$ with
$\lnorm_p$-norm $\leq \epsilon_p$ is never greater than a given SNR threshold
$1/\epsilon$.
For $p=2$ this imposes $\epsilon_2 = \epsilon \norm{\xx}_2$. More generally,
studying the inclusion of $\lnorm_p$-balls in $\ell_2$-balls yields
\begin{equation}\label{constantSNRp}
\epsilon_p = \epsilon \norm{\xx}_2 d^{1/p-1/2} \, .
\end{equation}
Note that this gives again $\epsilon_p = \epsilon_\infty d^{1/p}$. 
This explains how to adjust the threshold
$\epsilon$ with varying $\ell_p$-attack norm.

Now, let us see how to adjust the
threshold of a given $\ell_p$-norm when the dimension $d$ varies.
Suppose that $\xx$ is a natural image and that decreasing its dimension means
either decreasing its resolution or cropping it. Because the statistics of
natural images are approximately resolution and scale invariant
\citep{huang00statistics}, in either case the average squared value of the image
pixels remains unchanged, which implies that $\norm{\xx}_2$ scales like $\sqrt
d$. Pasting this back into \eqref{constantSNRp}, we again get:
\begin{equation*}
\epsilon_p = \epsilon_\infty \, d^{1/p} \ .
\end{equation*}
In particular,  $\epsilon_\infty \propto \epsilon$ is a dimension free number,
exactly like in \eqref{rescale} of the main part.

Now, why did we choose the SNR as our invariant reference
quantity and not anything else? One reason is that it corresponds to a physical
power ratio between the image and the perturbation, which we think the human
eye is sensible to. Of course, the eye's sensitivity also depends on the
spectral frequency of the signals involved, but we are only interested in
orders of magnitude here.

Another point: any image $\xx$ yields an adversarial perturbation $\dd_{\xx}$,
where by constraint $\norm{\xx}_2 / \norm{\dd_{\xx}} \leq 1/\epsilon$. For
$\ell_2$-attacks, this inequality is actually an equality. 
But what about other $\ell_p$-attacks: (on average over $\xx$,) how far is the
signal-to-noise ratio from its imposed upper bound $1/\epsilon$?
For $p \not \in \{1,2,\infty\}$, the answer unfortunately depends on the
pixel-statistics of the images. But when $p$ is 1 or $\infty$, then
the situation is locally the same as for $p=2$. Specifically:
\begin{lemma}
    Let $\xx$ be a given input and $\epsilon > 0$. Let $\epsilon_p$ be the
    greatest threshold such that for any $\dd$ with $\norm{\dd}_p \leq
    \epsilon_p$, the SNR $\norm{\xx}_2 / \norm{\dd}_2$ is $\leq 1/\epsilon$.
    Then $\epsilon_p = \epsilon \norm{\xx}_2 d^{1/p-1/2}$.
    
    Moreover, for $p \in \{1,2,\infty\}$, if $\dd_{\xx}$ is the
    $\epsilon_p$-sized $\lnorm_p$-attack that locally maximizes the loss-increase
    i.e.\ $\dd_{\xx} = \mathrm{arg\,max}_{\norm{\dd}_p \leq \epsilon_p} |\dl \cdot
    \dd|$, then:
    \begin{equation*}
        \mathop{\mathrm{SNR}}(\xx):= \frac{\norm{\xx}_2}{\norm{\dd_{\xx}}_2}
        = \frac 1 \epsilon
        \ \ \text{and} \ \
        \ee{\xx}{\mathop{\mathrm{SNR}}(\xx)} = \frac 1 \epsilon \ .
    \end{equation*}
\end{lemma}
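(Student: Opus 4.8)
The plan is to split the statement into two essentially independent pieces. The value of $\epsilon_p$ is a purely deterministic fact about the relative geometry of $\ell_p$- and $\ell_2$-balls in $\R^d$, with no reference to $\loss$; the two $\SNR$ identities then follow by writing down, for each of the three norms, the explicit maximiser $\dd_{\xx}$ of the linear functional $\dd \mapsto |\dl \cdot \dd|$ and substituting the value of $\epsilon_p$ obtained in the first step. The very last identity, $\ee{\xx}{\SNR(\xx)} = 1/\epsilon$, will be a one-liner once the pointwise equality $\SNR(\xx) = 1/\epsilon$ is established, since a constant equals its own expectation.

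For the first claim I read the defining property of $\epsilon_p$ as requiring the whole $\ell_p$-ball of radius $\epsilon_p$ to sit inside the $\ell_2$-ball of radius $\epsilon\norm{\xx}_2$, so that no admissible perturbation is ever more perceptible than the tolerated noise level. This is exactly the ball inclusion alluded to before the lemma, and the largest admissible $\epsilon_p$ is $\epsilon\norm{\xx}_2 / C_p$, where $C_p := \sup_{\dd \neq 0} \norm{\dd}_2 / \norm{\dd}_p$ is the best constant comparing the two norms. I would evaluate $C_p$ from the power-mean (Hölder) inequality: for $p \geq 2$ the ratio is maximised by the uniform vector $\pm(1,\dots,1)$, giving $C_p = d^{1/2-1/p}$ and hence $\epsilon_p = \epsilon\norm{\xx}_2\, d^{1/p-1/2}$, which is the stated closed form. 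The subtle point is that the extremiser \emph{flips} as $p$ crosses $2$: for $p \le 2$ the $\ell_2$-norm over the $\ell_p$-ball is instead maximised at a single coordinate, so the comparison constant and the extremal perturbation must be identified separately in each regime, and the endpoint $p=1$ needs this single-coordinate extremiser.

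For the second claim I would invoke Hölder duality to make $\dd_{\xx}$ explicit for each $p\in\{1,2,\infty\}$: for $p=2$ it is the gradient-aligned vector $\dd_{\xx} = \epsilon_2\, \dl/\norm{\dl}_2$, with $\norm{\dd_{\xx}}_2 = \epsilon_2$; for $p=\infty$ it is the sign vector $\epsilon_\infty\,\sign(\dl)$, whose $d$ coordinates each have magnitude $\epsilon_\infty$, so $\norm{\dd_{\xx}}_2 = \epsilon_\infty\sqrt d$; for $p=1$ it concentrates the entire budget on the coordinate where $|\dl|$ is largest, so $\norm{\dd_{\xx}}_2 = \epsilon_1$. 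The key observation is that for each of these three norms the adversarial maximiser lands precisely on the extreme point of the $\ell_p$-ball that also realises the \emph{largest} $\ell_2$-norm over that ball (a cube vertex for $p=\infty$, a cross-polytope vertex for $p=1$, and any point of the sphere for $p=2$). Hence $\norm{\dd_{\xx}}_2 = \epsilon_p\, C_p$, and since $\epsilon_p$ was chosen so that the inclusion is tight, this equals $\epsilon\norm{\xx}_2$. Therefore $\SNR(\xx) = \norm{\xx}_2/\norm{\dd_{\xx}}_2 = 1/\epsilon$ for every $\xx$, and the expectation identity follows at once.

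The crux of the whole argument is this coincidence between the adversarial direction and the $\ell_2$-extremal vertex, and I expect the main obstacle to lie exactly at the non-Euclidean endpoints. What singles out $p \in \{1,2,\infty\}$ is that the linear functional is maximised at a vertex (or, for $p=2$, anywhere on the sphere) that simultaneously maximises the Euclidean size, so the attack's $\ell_2$-norm is pinned independently of the gradient direction; I would spend most of the effort verifying this alignment and the matching value of $C_p$ in each regime, since the flip of the extremiser at $p=2$ is where the constants can easily be mishandled. For general $p \notin \{1,2,\infty\}$ the adversarial direction lands strictly between the $\ell_2$-extremes of the ball, so the resulting $\SNR$ genuinely depends on the coordinate statistics of $\dl$ — which is precisely why the clean identity is claimed only for these three cases.
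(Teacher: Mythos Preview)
Your proposal is correct and mirrors the paper's own proof: the formula for $\epsilon_p$ is obtained from the inclusion of the largest $\ell_p$-ball inside the $\ell_2$-ball of radius $\epsilon\norm{\xx}_2$, and the $\SNR$ identities are checked case by case via the explicit dual maximisers $\dd_{\xx}=\epsilon_2\,\dl/\norm{\dl}_2$ for $p=2$, $\dd_{\xx}=\epsilon_\infty\sign(\dl)$ for $p=\infty$, and the single-coordinate perturbation for $p=1$. Your unifying remark that each of these maximisers is simultaneously the $\ell_2$-extremal point of its $\ell_p$-ball, and your attention to the flip of the extremiser at $p=2$, are refinements that the paper's terse argument does not make explicit.
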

\begin{proof}
    The first paragraph follows from the fact that the greatest $\lnorm_p$-ball
    included in an $\ell_2$-ball of radius $\epsilon \norm{\xx}_2$ has radius
    $\epsilon \norm{\xx}_2 d^{1/p - 1/2}$. 
    
    The second paragraph is clear for $p=2$. For $p=\infty$, it follows from
    the fact that $\dd_{\xx} = \epsilon_\infty \sign{\dl}$ which satisfies:
    $\norm{\dd_{\xx}}_2 = \epsilon_\infty \sqrt d = \epsilon \norm{\xx}_2$.
    For $p=1$, it is because $\dd_{\xx} =
    \epsilon_1 \max_{i=1..d} |(\dl)_i|$, which satisfies:
    $\norm{\dd_{\xx}}_2 = \epsilon_2 / \sqrt d = \epsilon \norm{\xx}_2$.
\end{proof}
Intuitively, this means that for $p \in \{1,2,\infty\}$,
the SNR of $\epsilon_p$-sized $\lnorm_p$-attacks on any input $\xx$ will be
exactly equal to its fixed upper limit $1/\epsilon$. And in particular, the
mean SNR over samples $\xx$ is the same ($1/\epsilon$) in all three cases.

\section{Comparison to the Cross-Lipschitz Regularizer\label{crosslip}}

In their Theorem~2.1, \citet{hein17formal} show that the minimal
$\epsilon=\norm{\dd}_p$ perturbation to fool the classifier must be bigger
than:
\begin{equation}\label{heinEq}
    \min_{k \neq c} \frac{f_c(\xx) - f_k(\xx)}
                         {\max_{y \in B(x, \epsilon)}
                          \norm{\dfc(y) - \dfk(y)}_q} \, .
\end{equation}
They argue that the training procedure typically already tries to maximize
$f_c(\xx) - f_k(\xx)$, thus one only needs to additionally ensure that
$\norm{\dfc(\xx) - \dfk(\xx)}_q$ is small.
They then introduce what they call a Cross-Lipschitz Regularization, which
corresponds to the case $p=2$ and involves the gradient differences between
\emph{all} classes:
\begin{equation}\label{heinregu}
    \mathcal{R}_{\mathrm{xLip}} :=
        \frac{1}{K^2} \sum_{k,h=1}^K \norm{\dfh(\xx) - \dfk(\xx)}_2^2
\end{equation}
In contrast, using \eqref{XentrGradient}, (the square of) our proposed regularizer
$\norm{\dl}_q$ from \eqref{newloss} can be rewritten, for $p=q=2$ as:
\begin{multline}\label{ourregu}
    \mathcal R_{\norm{\cdot}_2}(f) 
          = \sum_{k,h=1}^K q_k(\xx) q_h(\xx) \,
                \big(\dfc(\xx) - \dfk(\xx)\big) \cdot \\
                \cdot \big(\dfc(\xx) - \dfh(\xx)\big)
\end{multline}
Although both \eqref{heinregu} and \eqref{ourregu} consist in $K^2$ terms,
corresponding to the $K^2$ cross-interaction between the $K$ classes, the big
difference is that while in \eqref{heinregu} all classes play exactly the same
role, in \eqref{ourregu} the summands all refer to the target class $c$ in at
least two different ways.  First, all gradient differences are always taken
with respect to $\dfc$. Second, each summand is weighted by the probabilities
$q_k(\xx)$ and $q_h(\xx)$ of the two involved classes, meaning that only the
classes with a non-negligible probability get their gradient regularized. This
reflects the idea that only points near the margin need a gradient
regularization, which incidentally will make the margin sharper.

\section{A Variant of Adversarially-Augmented Training\label{fgsmVariant}}

In usual adversarially augmented training, the adversarial image $\xx+\dd$ is
generated on the fly, but is nevertheless treated as a fixed input of the
neural net, which means that the gradient does not get
backpropagated through $\dd$. This need not be. As $\dd$ is itself a
function of $\xx$, the gradients could actually also be backpropagated through
$\dd$. As it was only a one-line change of our code, we used this opportunity
to test this variant of adversarial training (FGSM-variant in
Figure~\ref{fig:results}). 
But except for an increased computation time, we
found no significant difference compared to usual augmented training.

\section{Additional Figures on the Experiments of Section~\ref{penResults}}

\paragraph{Effect of Changing the Attack-Method on Adversarial Vulnerability}
To verify that our empirical results on adversarial vulnerability were
essentially unaffected by the attack method, we measured the adversarial
vulnerability of each network trained in Section~\ref{penResults} using, not
only PGD-attacks (as shown in the figures of the main text), but various other
attack-methods. We tested single-step $\ell_\infty$- (FGSM) and
$\ell_2$-attacks, iterative $\ell_\infty$- (PGD without random start) and
$\ell_2$-attacks, and DeepFool attacks \citep{moosavi16deepfool}. 

Figure~\ref{fig:normDependence} illustrates the results. While
Figure~\ref{fig:results} from the main part fixed the attack type --~iterative
$\lnorm_\infty$-attacks~-- and plotted the curves obtained for various training
methods, Figure~\ref{fig:normDependence} now fixes the training method
--~gradient $\lnorm_1$-regularization~-- and plots the obtained adversarial
vulnerabilities for the different attack types. Figure~\ref{fig:normDependence}
shows that, while the adversarial vulnerability values vary considerably from
method to method, the overall relation between gradient-norms or
regularization-strengths on the one side and vulnerability on the other is
extremely similar for all methods: it increases almost linearly with increasing
gradient-norms and decreasing regularization-strength. Changing the
attack-method in Figure~\ref{fig:results} (main part) hence essentially
changes only the vulnerability scale, not the shape of the curves. Moreover,
the functional-like link between average gradient-norms and every single
approximation of adversarial vulnerability confirms that the first-order
vulnerability is an essential component of adversarial vulnerability.

\begin{figure}[htb]
    \centering
    \includegraphics[width=\linewidth]{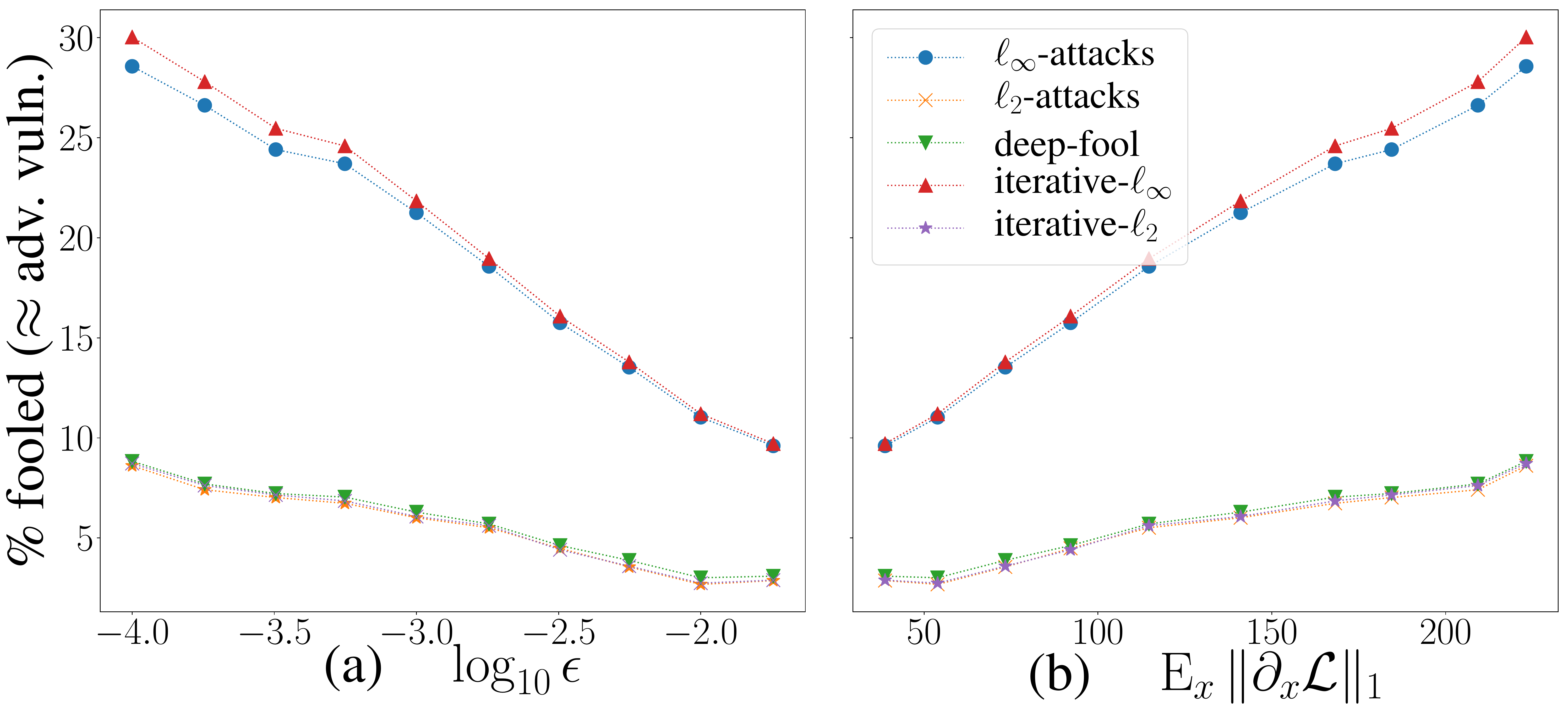}
    \caption{Adversarial vulnerability approximated
    by different attack-types for 10 trained networks as a function of $(a)$ the
    $\lnorm_1$ gradient regularization-strength $\epsilon$ used to train the
    nets and $(b)$ the average gradient-norm. These curves confirm that the
    first-order expansion term in \eqref{dualnorm} is a crucial component of
    adversarial vulnerability.
    \label{fig:normDependence}}
    \vspace{-1ex}
\end{figure}

Note that on Figure~\ref{fig:normDependence}, the two $\lnorm_\infty$-attacks
seem more efficient than the others. This is because we bounded the
attack threshold $\epsilon_\infty$ in $\lnorm_\infty$-norm, whereas the
$\ell_2$- (single-step and iterative) and DeepFool attacks try to minimize the
$\ell_2$-perturbation. With an $\lnorm_2$-threshold, we get the
opposite: which brings us to Figure~\ref{fig:normDependence2}.

\begin{figure*}[!htb]
    \centering
    \includegraphics[width=\linewidth]{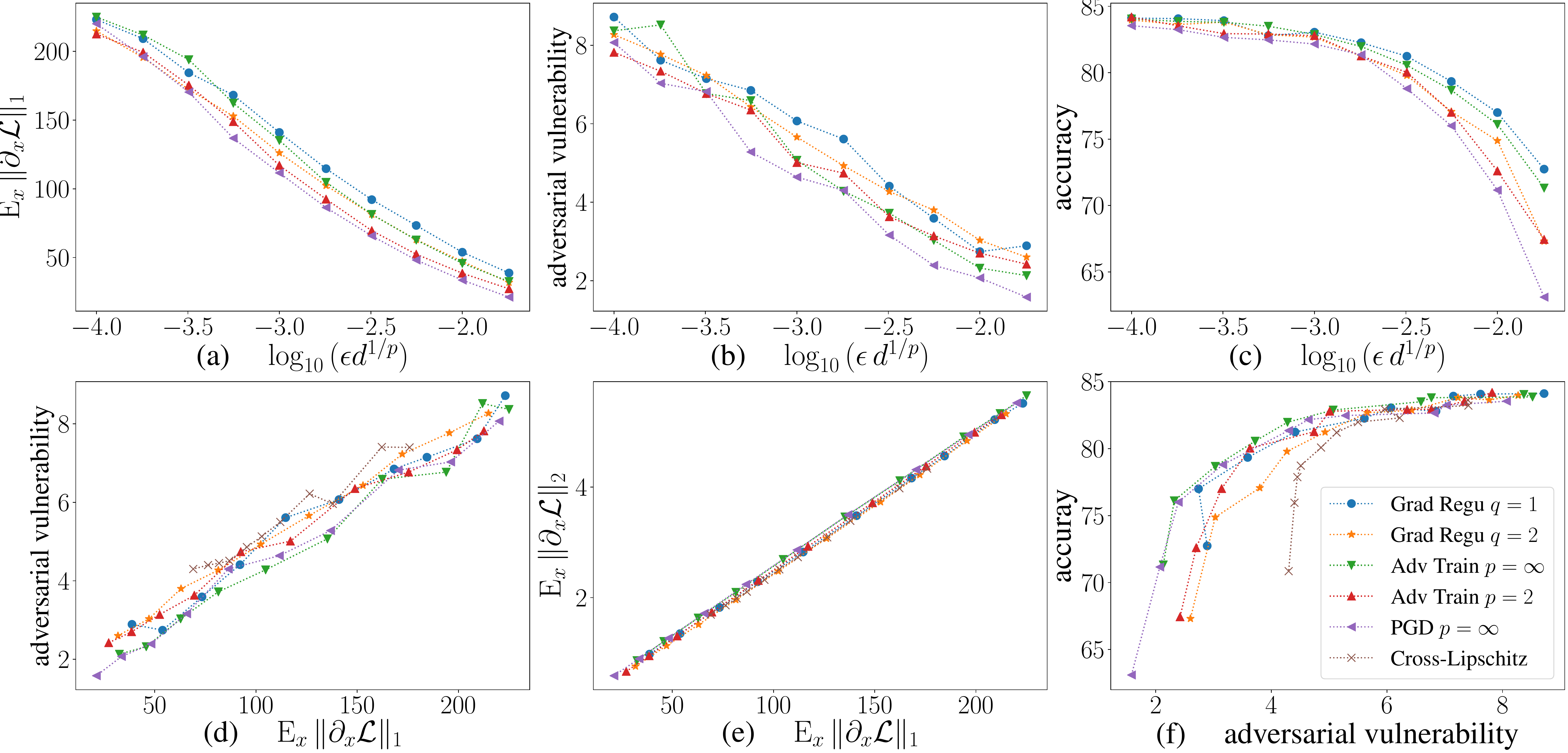}
    \caption{Same as Figure~\ref{fig:results}, but with an $\lnorm_2$-
    perturbation-threshold (instead of $\lnorm_\infty$) and deep-fool attacks
    \citep{moosavi16deepfool} instead of iterative $\lnorm_\infty$ ones.
    All curves look essentially the same than in Fig.~\ref{fig:results}.
    \label{fig:results2}
    }
    \vspace{-2ex}
\end{figure*}

\paragraph{Figures with an $\lnorm_2$ Perturbation-Threshold and
Deep-Fool Attacks}
Here we plot the same curves than on Figures~\ref{fig:results} (main part)
and~\ref{fig:normDependence}, but using an $\ell_2$-attack threshold of size
$\epsilon_2 = \epsilon_\infty \sqrt d$ instead of the $\lnorm_\infty$-threshold, and, for
Fig.~\ref{fig:results2}, using deep-fool attacks \citep{moosavi16deepfool}
instead of iterative $\ell_\infty$-ones. 
Note that contrary to $\lnorm_\infty$-thresholds, $\lnorm_2$-thresholds must be
rescaled by $\sqrt d$ to stay consistent across dimensions (see
Eq.\ref{rescale} and Appendix~\ref{perception}). All curves look essentially
the same as their counterparts with an $\lnorm_\infty$-threshold.
\begin{figure}[htb]
    \centering
    \includegraphics[width=\linewidth]{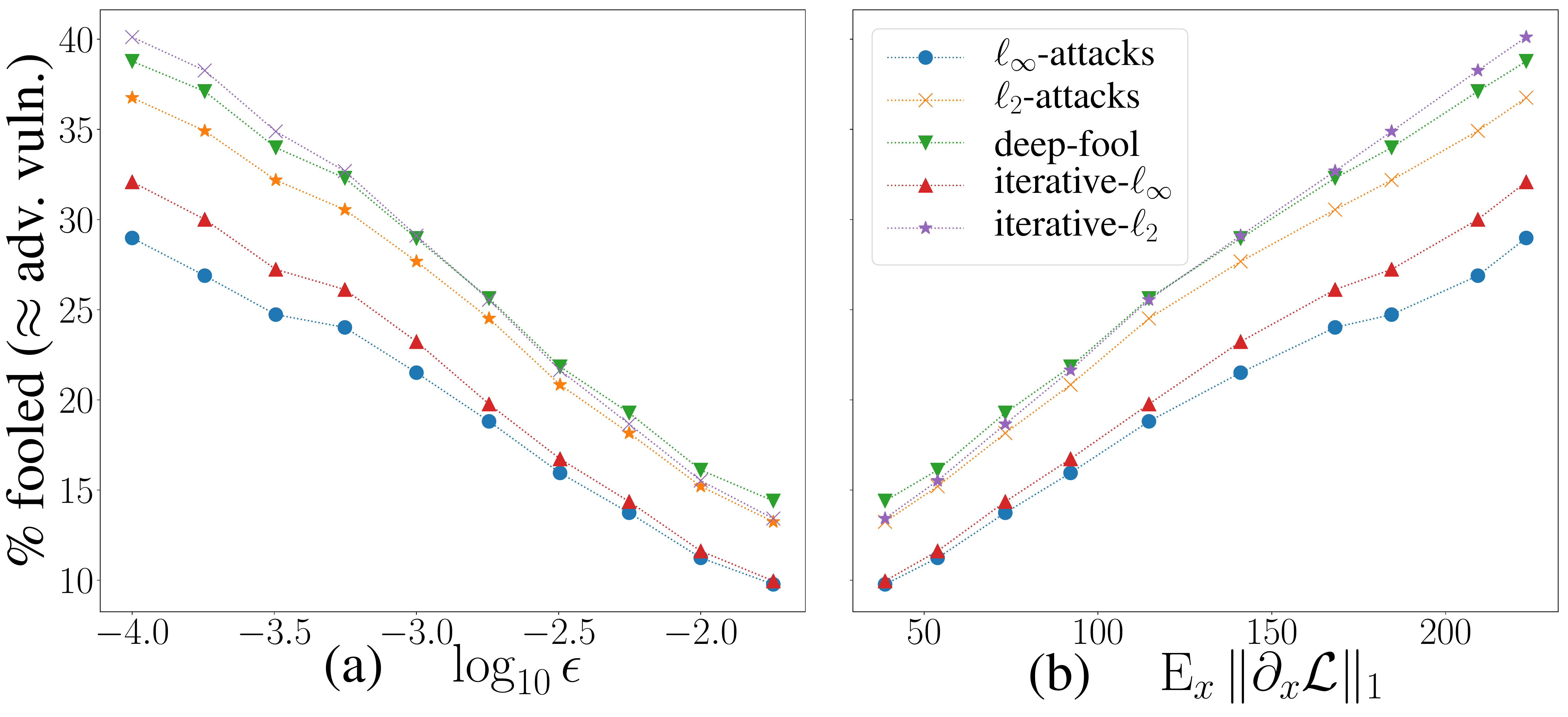}
    \caption{Same as Figure~\ref{fig:normDependence}
    but using an $\lnorm_2$ threshold instead of a $\lnorm_\infty$ one. Now the
    $\lnorm_2$-based methods (deep-fool, and single-step and iterative
    $\lnorm_2$-attacks) seem more effective than the $\lnorm_\infty$ ones.
    \label{fig:normDependence2}}
    \vspace{-2ex}
\end{figure}

\newpage
\onecolumn
\section{Additional Material on the Up-Sampled CIFAR-10 Experiments of Section~\ref{resoDependance}\label{sec:additional_figures}}

\subsection{Network architectures\label{sec:architecture}}

The architecture of the CNNs used in Section~\ref{resoDependance} was a
succession of 8 `convolution $\rightarrow$ batchnorm $\rightarrow$ ReLU' layers
with $64$ output channels, followed by a final full-connection to the
logit-outputs. We used $2\times2$-max-poolings after the convolutions of layers
2,4, 6 and 8, and a final max-pooling after layer 8 that fed only 1 neuron per
channel to the fully-connected layer. To ensure that the convolution-kernels
cover similar ranges of the images across each of the 32, 64, 128 and 256
input-resolutions, we respectively dilated all convolutions (`\`a trous') by a
factor $1$, $2$, $4$ and $8$.

\subsection{Additional Plots}
Here we provide various additional plots computed with the networks trained in
Section~\ref{resoDependance} on upsampled CIFAR-10 images of various sizes.

We first reproduce on Figure~\ref{fig:bcifar_all} the equivalent of
Figure~\ref{fig:results} (that compared the different regularization methods)
but with each curve now representing a specific input-size instead of a
regularization method. Figure~\ref{fig:bcifar_te_training_curves} then analyses
the evolution over training epochs of the test set performances on the
up-sampled 3x256x256 CIFAR-10 images and unveils a striking discrepancy between
error-rate (-damage) and cross-entropy loss (-damage): the cross-entropy
clearly overfits, but the error-rate does not. This motivates a small
comparison between performance at end-of-training and at early stopping (i.e.\
at the epochs with minimal cross-entropy loss).
Figure~\ref{fig:bcifar_early_vs_last} therefore merges several plots from the
training curves of Figure~\ref{fig:bcifar_te_training_curves} by using the
epochs an implicit parameter, and compares their relation at end-of-training
and after early-stopping. Figure~\ref{fig:bcifar_early_vs_last2} continues the
comparison between end-of-training and early-stopping.
Figure~\ref{fig:bcifar_tr_training_curves} then essentially plots the
equivalent of Fig.\ref{fig:bcifar_te_training_curves} but for the training-set
values, showing that, contrary to the test set values, the training-error and
-loss and -loss-damage decrease over training. This adversarial loss-damage
appears to be much smaller on the training than on the test set, which
motivates our last figure, Figure~\ref{fig:tr_te_norms}, that compares the
training and test gradient $\lnorm_1$-norms for all input resolutions. It
confirms the huge discrepancy between the gradient norms on the training and
test set. This suggests that, outside the training sample, and without strong
regularization, the networks tend to recover their prior gradient-properties,
i.e.\ naturally large gradients.

For detailed comments, see figures' captions. Note that, for improved
readability, Figures \ref{fig:bcifar_te_training_curves},
\ref{fig:bcifar_tr_training_curves} \& \ref{fig:tr_te_norms} were smoothed
using an exponential moving average with weight $0.9$, $0.6$ and $0.6$
respectively (higher weights $\rightarrow$ smoother).

\begin{figure*}[!htb]
    \centering 
    \includegraphics[width=\linewidth]{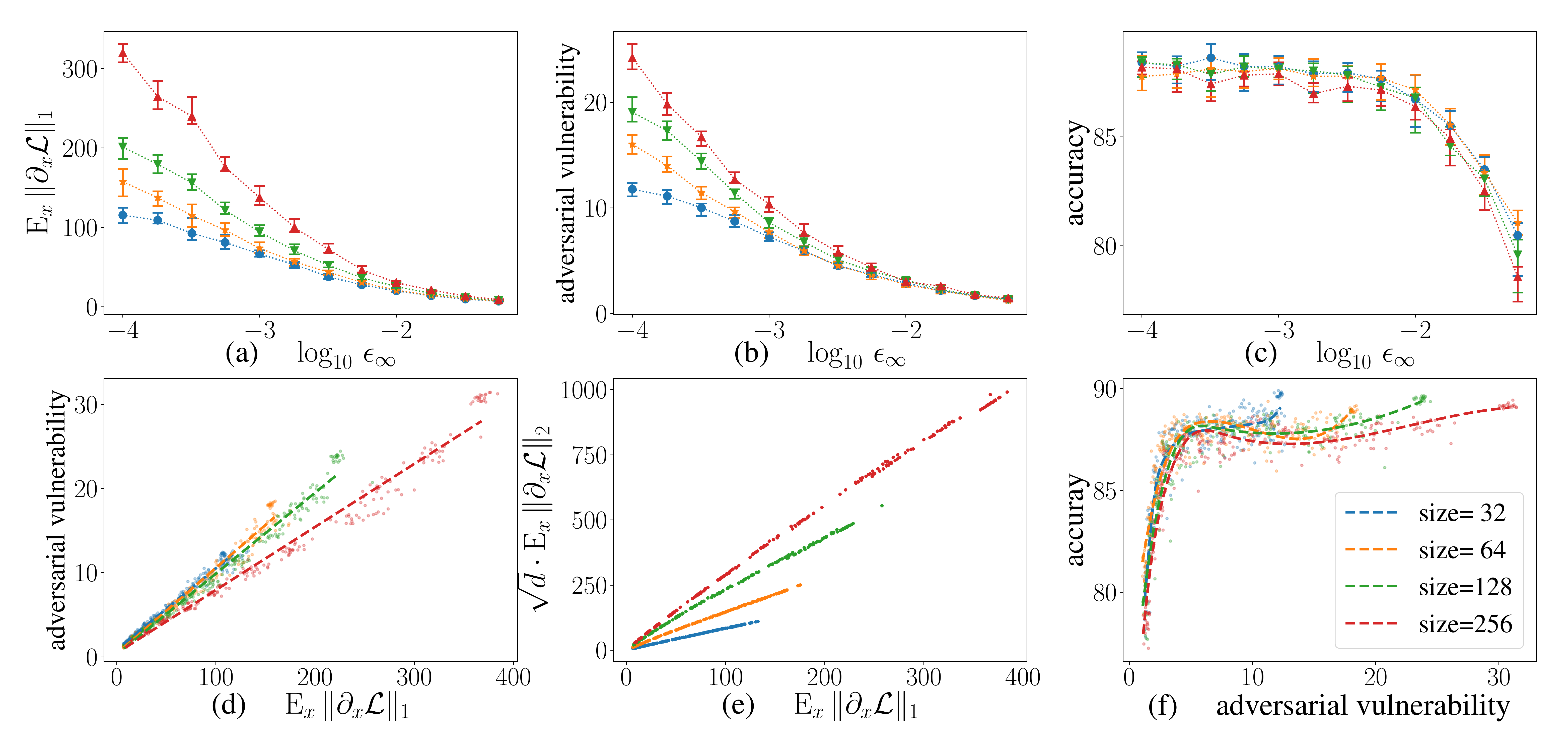}
    \caption{\label{fig:bcifar_all}Equivalent of Figure~\ref{fig:results}, but
    with each curve representing one specific input-size (using up-sampled
    CIFAR-10 images) rather than one training method.  Recall that all values
    (gradient-norms, vulnerability and accuracy) were measured over the last 20
    training epochs on the test set.  They appear all as an individual points
    on the bottom-row plots, and are summarized with errorbars on the
    upper-row.  (d): confirms the functional- (linear-) like relation between
    average loss-gradient norms and adversarial vulnerability. While the slope
    of this relation stays unchanged for images of height and width $\leq 128$,
    it gets slightly dampened for size $256$. Overall, this plot confirms that
    first-order vulnerability (i.e.\ gradient-norms) is an essential part of
    adversarial vulnerability.  (e): confirms the linear relationship between
    $\lnorm_1$- and $\lnorm_2$-gradient-norms (which explains why protecting
    against $\lnorm_\infty$-attacks also protects against $\lnorm_2$-attacks
    and vice-versa), but reveals that the slope does not just change like
    $\sqrt d$ with growing dimension.  Figs.\fga \& \fgb are the same than
    Figs.\ffb \& \ffa (main part), but with a different presentation. Figs.\fgc
    \& \fgf are the same than Figs.\ffc \& \ffd (see main part for comments).}
\end{figure*}

\begin{figure*}[!htb]
    \centering 
    \includegraphics[width=\linewidth]{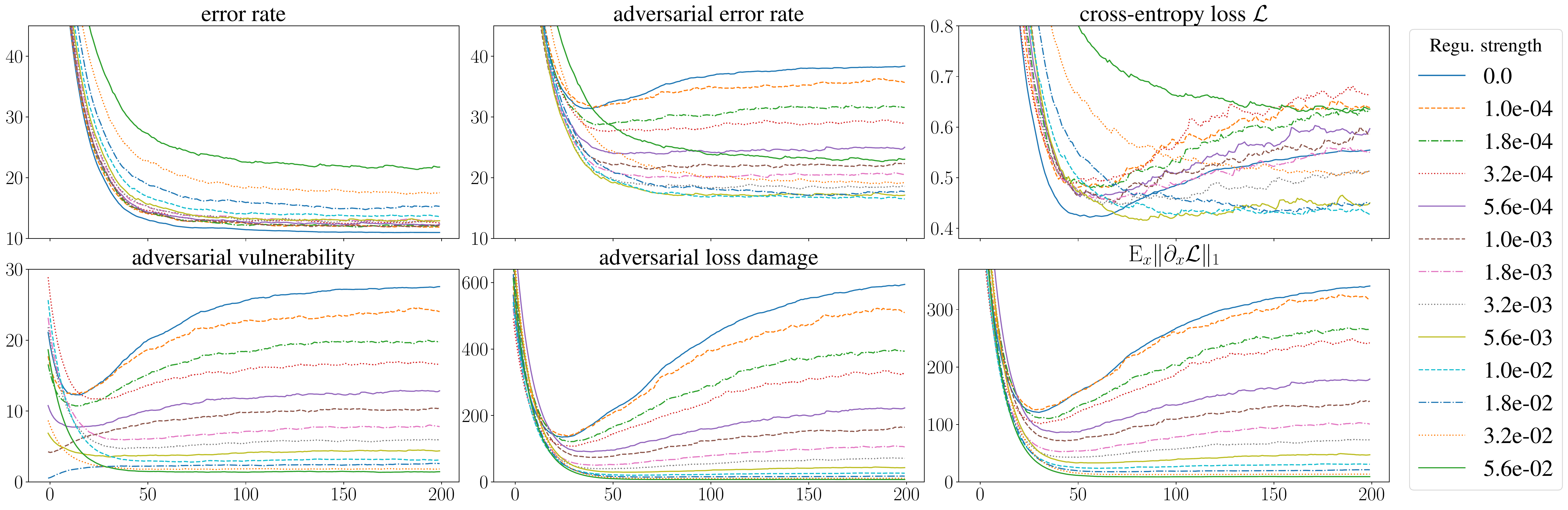}
    \caption{\label{fig:bcifar_te_training_curves}Evolution over the training
    epochs of the networks' test-set performances on the 3x256x256 up-sampled
    CIFAR-10 dataset. We call ``adversarial error-rate'' the error-rate after
    attack (i.e.\ usual error-rate plus accuracy-damage). We also divided the
    adversarial loss damage by the attack-threshold $\epsilon_\infty$ to
    get the same units than $\ee{\xx}{\dl}$ (see
    Fig.\ref{fig:bcifar_early_vs_last} for explanations). While the usual
    error-rate constantly decreases on the test-set (hence showing no sign of
    overfitting), surprisingly, with low or no PGD-regularization, the
    cross-entropy loss (i.e.\ the training objective) severely increases after
    approximately 50 epochs. Moreover, the adversarial error-rate (and
    vulnerability/loss-damage/gradient-norms) curve has a strikingly similar
    shape. Hence, even though the accuracy improves, the cross-entropy
    overfitting still signals some form of overfitting (that could be called
    `gradient-overfitting'), which makes the network more vulnerable.}
\end{figure*}

\begin{figure*}[tb]
    \centering 
    \includegraphics[width=\linewidth]{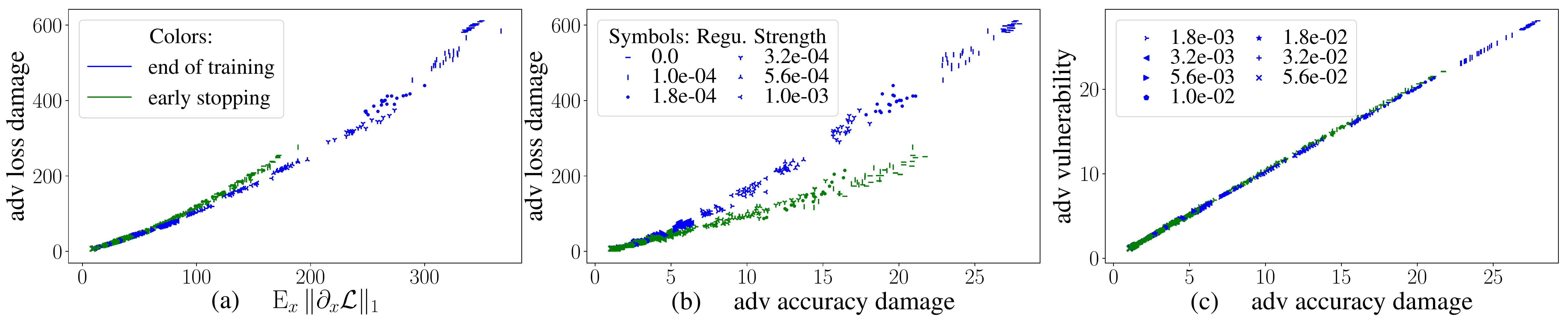}
    \caption{\label{fig:bcifar_early_vs_last}Relations between adversarial
    vulnerability, loss- and accuracy-damage, and loss-gradient norms computed
    on the up-sampled 3x256x256 CIFAR-10 test set images over the 20 last
    training epochs (blue) and at the 20 optimal early-stopping epochs (green),
    i.e.\ the 20 epochs with smallest cross-entropy test-loss. Note that these
    plots essentially merge different plots from
    Fig.\ref{fig:bcifar_te_training_curves} by using the 20 end-of-training and
    20 early-stopping epochs as a common implicit parameter.  As in
    Fig.\ref{fig:bcifar_te_training_curves}, we divided the adversarial loss
    damage by the attack-threshold $\epsilon_\infty$. This `normalized' loss
    damage can thus be understood as the average loss-gradient norm between an
    image $\xx$ and its adversarial perturbation $\xx+\dd$, and can directly be
    compared with $\ee{\xx}{\dl}$. (a) Gradient norms appear to be a stable
    indicator for loss-damage through-out training. Note however that the
    gradient norms at the original input points are on average only half the
    size of the gradients of their surroundings. That might explain why in
    practice, iterative gradient regularization (e.g.\ PGD) is more effective
    than single-step regularization (e.g.\ FGSM). (b) Adversarial accuracy- and
    loss-damage are in a functional-like relationship, but which evolves over
    training (thus the difference between end-of-training and early-stopping).
    (c) Adversarial vulnerability and accuracy-damage are in a constant, almost
    perfectly proportional relationship. Comparing (b) and (c) suggests that
    the main difference between adversarial loss-damage and adversarial
    vulnerability comes from the difference between the $\loss_{0/1}$- and the
    cross-entropy loss $\loss$.}
\end{figure*}

\begin{figure*}[!htb]
    \centering 
    \includegraphics[width=\linewidth]{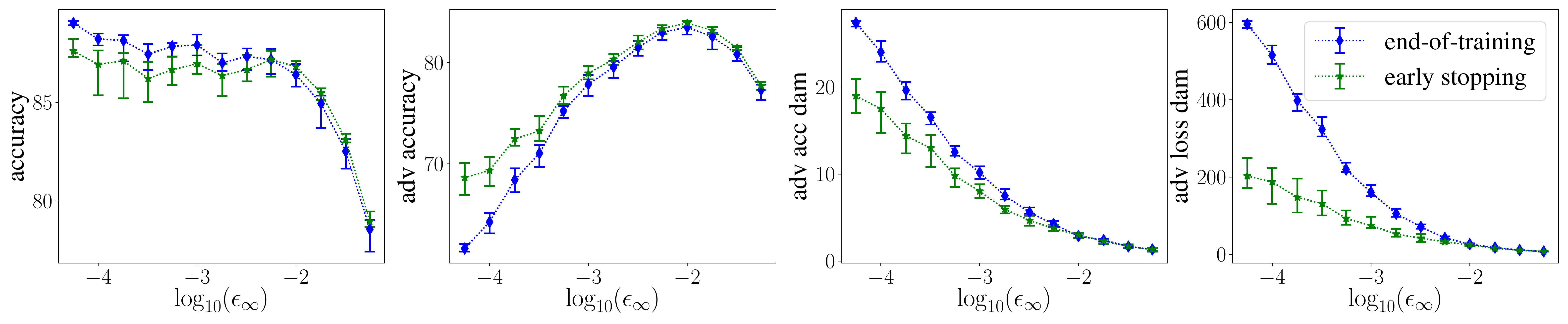}
    \caption{\label{fig:bcifar_early_vs_last2} Network performance at early
    stopping versus end-of-training, for different regularization strength, on
    up-sampled 3x256x256 CIFAR-10 images. Training past the epochs with minimal
    cross-entropy test-loss might improve the final test-accuracy, but
    significantly increases the networks' vulnerability. (The left-most point
    of each curve actually corresponds to $\epsilon_\infty=0.$).}
\end{figure*}

\begin{figure*}[!htb]
    \centering 
    \includegraphics[width=\linewidth]{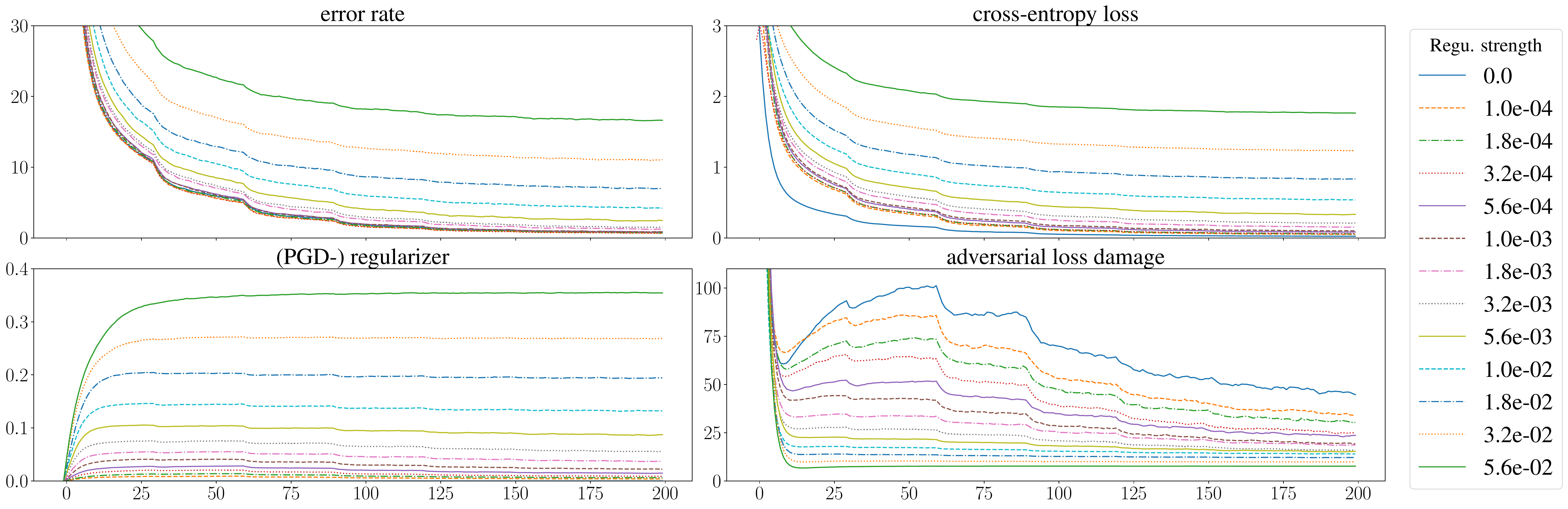}
    \caption{\label{fig:bcifar_tr_training_curves}
    Evolution over the training epochs of the networks' \emph{training-set}
    performances on the 3x256x256 up-sampled CIFAR-10 dataset. Compare with
    Fig.\ref{fig:bcifar_te_training_curves} for the corresponding
    \emph{test-set} performances. Contrary to the test-set performances,
    error-rate, cross-entropy and adversarial loss-damage all steadily decrease
    (after some initialization epochs).}
\end{figure*}

\begin{figure*}[!htb]
    \centering 
    \includegraphics[width=\linewidth]{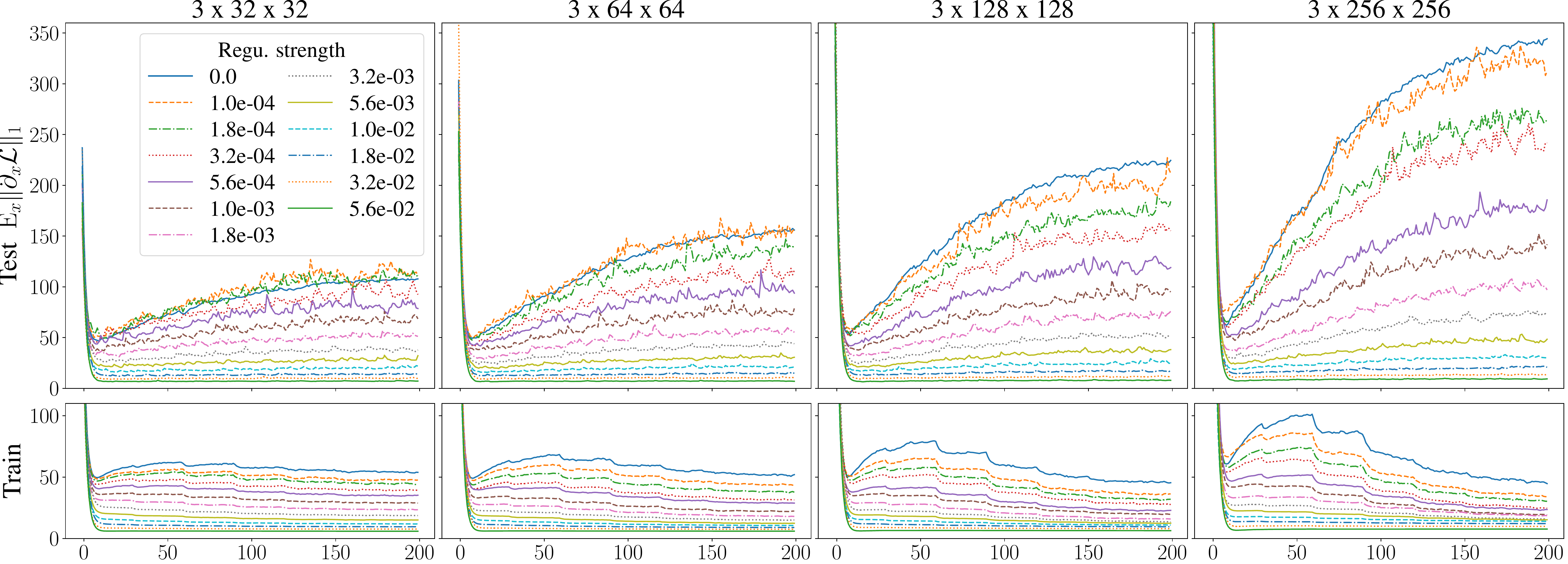}
    \caption{\label{fig:tr_te_norms}Evolution over training epochs of the
    average $\lnorm_1$-gradient-norms on the test (top row) and training set
    (bottom row). \emph{There is a clear discrepancy between training and test
    set values}: after around 50-epochs of initialization, the gradient norms
    constantly decrease on the training set and become dimension independent
    (even without regularization); on the test set however, they increase and
    scale like $\sqrt{d}$. This suggests that, outside the training points, and
    without very strong gradient regularization, the nets tend to recover their
    prior gradient-properties (i.e.\ naturally large gradients).}
\end{figure*}

\newpage
\section{Figures for the Experiments of Section~\ref{resoDependance} on the
Custom Mini-ImageNet Dataset \label{sec:imgnet}}

\begin{figure*}[!htb]
    \centering 
    \includegraphics[width=\linewidth]{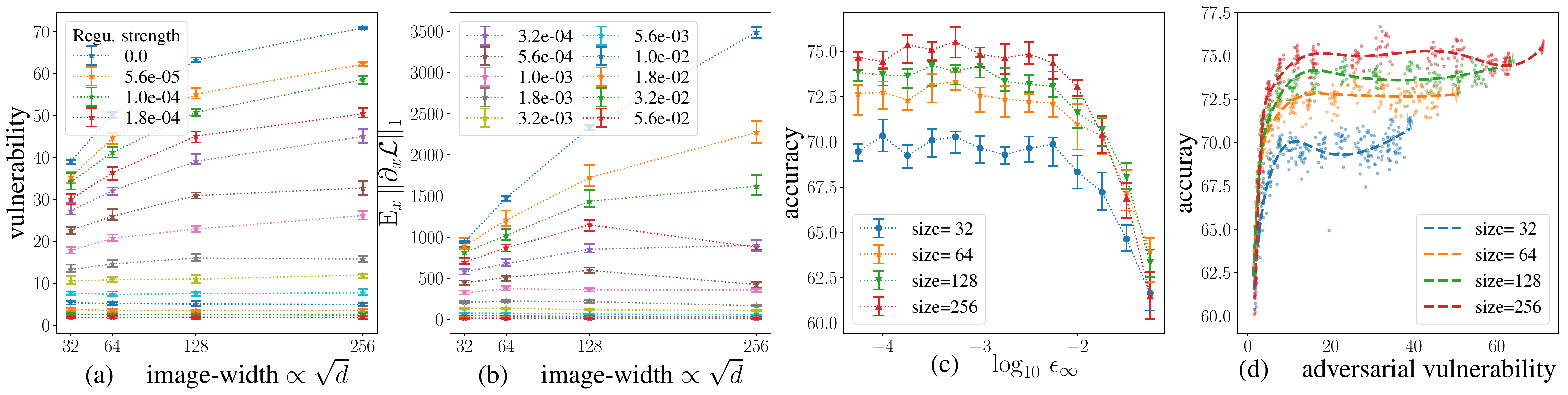}
    \caption{\label{fig:imgnet_summary}Same as
    Figure~\ref{fig:bcifar_summary}, but using down-sampled images from our
    custom 12-class `Mini-ImageNet' dataset (see Sec.\ref{resoDependance}) rather than up-sampled CIFAR-10
    images. Interestingly, (d) shows that PGD training finds better
    accuracy-vulnerability trade-offs with higher input dimensions. It is thus
    more effective at tackling adversarial vulnerability than a simple initial
    down-sampling layer that would be used to reduce the data's dimension.}
\end{figure*}

\begin{figure*}[!htb]
    \centering 
    \includegraphics[width=\linewidth]{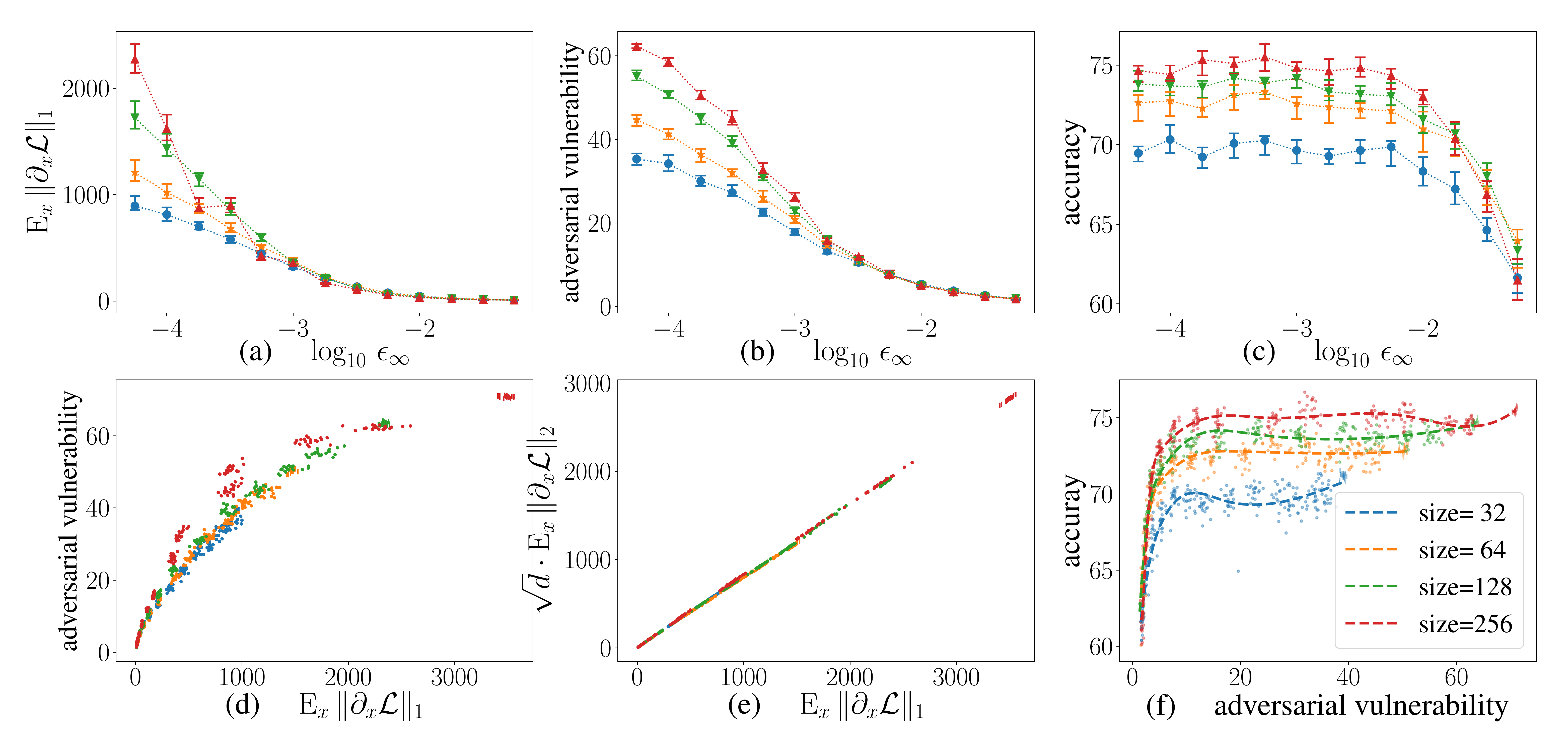}
    \caption{\label{fig:imgnet_all}Same as
    Figure~\ref{fig:bcifar_all}, but using down-sampled images from our custom
    12-class `Mini-ImageNet' dataset rather than up-sampled CIFAR-10 images.}
\end{figure*}

\end{document}